\documentclass{article}
\PassOptionsToPackage{authoryear,square}{natbib}
\usepackage[main,preprint]{neurips_2026}
\usepackage[utf8]{inputenc}
\usepackage[T1]{fontenc}
\usepackage{hyperref}
\hypersetup{hidelinks,
  pdftitle={Centralized Critics for N-Agent Ad Hoc Teamwork},
  pdfauthor={Andrew McAlinden, Caroline Wang, Steven Shi, Arrasy Rahman, Elad Liebman, Peter Stone}}
\usepackage{url}
\usepackage{booktabs}
\usepackage{amsfonts,amssymb,amsmath,amsthm,mathtools,mathrsfs}
\usepackage{graphicx}
\usepackage{subcaption}
\usepackage{microtype}
\usepackage{xcolor}

\theoremstyle{plain}
\newtheorem{theorem}{Theorem}[section]

\newtheorem{corollary}[theorem]{Corollary}

\theoremstyle{definition}
\newtheorem{definition}[theorem]{Definition}

\newtheorem{assumption}[theorem]{Assumption}
\theoremstyle{remark}

\newcommand{\caroline}[1]{}
\title{Centralized Critics for $N$-Agent Ad Hoc Teamwork}
\author{Andrew McAlinden \And Caroline Wang \And Steven Shi \AND
Arrasy Rahman \And Elad Liebman \And Peter Stone}

\begin{document}
\maketitle
\begin{abstract}

Centralized training with decentralized execution (CTDE) has become the dominant paradigm in multi-agent reinforcement learning (MARL), motivating extensive study of what privileged information should be made available to the critic during training. While prior work has examined the role of global state in fully cooperative settings, the design of centralized critics for $N$-agent ad hoc teamwork---in which dynamically varying set of learning agents must coordinate with an arbitrary set of uncontrolled teammates drawn from an unknown distribution---remains poorly understood. We study this question by comparing two natural classes of advantage estimators: \emph{controlled history} estimators, which condition only on the observation and action histories of the learner's own agents, and \emph{joint history} estimators, which additionally condition on the histories of the uncontrolled teammates. We prove that both estimators are unbiased and characterize the conditions under which the joint history estimate attains strictly lower variance than its controlled counterpart. Empirically, joint history critics match or outperform controlled history critics across a range of NAHT tasks, yielding improved sample efficiency and final returns. Together, these results provide both a theoretical justification and an empirical recommendation for incorporating uncontrolled agent histories into centralized critics for $N$-agent ad hoc teamwork.
\end{abstract}

\section{Introduction}
Cooperative multi-agent reinforcement learning (CMARL) is a paradigm for learning coordinated team policies on fully cooperative tasks, assuming full control over all agents \citep{gronauer2022multi}. In contrast, ad hoc teamwork (AHT) considers the challenge of training a single agent to collaborate with previously unknown teammates on fully cooperative tasks \citep{stone2010adhoc}.
The recently introduced N-agent ad hoc teamwork (NAHT) problem generalizes both paradigms by considering the scenario where a set of \textit{controlled} agents must collaborate with dynamically varying numbers and types of \textit{uncontrolled} teammates without the ability to pre-coordinate. The primary challenges of the NAHT problem are (1) dealing with a changing number of (un)controlled agents, and (2) generalizing to unseen uncontrolled teammate behavior \citep{wang2024naht}. 

For the NAHT problem, we assume that controlled agents only have access to controlled agent information, as the number of uncontrolled agents changes per episode, and the controlled agents must generalize to differing types of uncontrolled agents without being given direct information about their partners at inference time. Previous successful algorithms for NAHT have chosen to also include this restriction for critics, with algorithms \citep{wang2025shapleymachinegametheoreticframework, wang2024naht, wang2025matnaht} only including controlled agent information in critics. The CTDE paradigm then poses the question of if uncontrolled agent information can be helpful in training while still restricting this information during inference.

Since the purpose of critics is to estimate the advantage function, it follows that better advantage estimation should lead to better sample efficiency when training policies. We study this question by comparing two natural classes of advantage estimators: \emph{controlled history} estimators, which condition only on the observation and action histories of the learner's own agents, and \emph{joint history} estimators, which additionally condition on the histories of the uncontrolled teammates. We prove in Section~\ref{sec:theory} that both estimators are unbiased and characterize the conditions under which the joint history estimate attains strictly lower variance than its controlled counterpart. Empirically (Section~\ref{sec:experiments}), joint history critics match or outperform controlled history critics across a range of NAHT tasks, yielding improved sample efficiency and final returns. Together, these results provide both a theoretical justification and an empirical recommendation for incorporating uncontrolled agent histories into centralized critics for $N$-agent ad hoc teamwork.


\section{Related Work}
\label{sec:related-work}

Previous algorithms for NAHT rely only on controlled agent information in critics. There is no existing study on including privileged information in centralized critics for NAHT, although this kind of analysis exists for MARL. The most relevant previous work focuses on the variance of the Q function and policy gradients for different types of MARL critics.

\paragraph{Previous Algorithms for NAHT.}
POAM \citep{wang2024naht} was the first proposed algorithm for NAHT. It uses decentralized, RNN-based critics that attempt to model the value function using only information from an individual controlled agent. Similarly, the Shapley Machine algorithm \citep{wang2025shapleymachinegametheoreticframework} uses the POAM critic, but models a different value function per-agent. The MAT-NAHT algorithm \citep{wang2025matnaht} is the first to use a single attention-based centralized critic with information from all controlled agents. The same paper introduced Cent Obs POAM, which uses a single centralized RNN-based critic with information from all controlled agents. Since our work is focused on centralized critics, we use MAT-NAHT and Cent Obs Poam for experiments, and abbreviate both methods to ``MAT'' and ``POAM.''

\paragraph{Centralized Critics in MARL.}
A separate line of work has analyzed the role of privileged information in centralized critics for cooperative MARL. \citet{lyu2023centralized} provide an extensive theoretical and empirical study of history-based, state-based, and history-state-based critics for Dec-POMDPs. Among their results, they show that centralized history-based critics yield the same expected policy gradient as decentralized history-based critics but with strictly higher variance, that state-based critics may introduce unbounded bias when used to train history-based policies, and that history-state critics are unbiased but again carry higher variance than purely history-based critics. While their analysis covers what kind of \emph{global} information should be exposed to the critic in MARL, the NAHT setting introduces a different design choice: whether the critic should condition on the trajectories of agents that are \emph{not under the learner's control}. Our work extends the bias-variance style of analysis pioneered by \citet{lyu2023centralized} to this setting, deriving an unbiasedness result and a variance comparison between controlled-history and joint-history advantage estimators in the NAHT problem. Since we use PPO for optimization instead of the vanilla policy gradient, we focus on analyzing the bias and variance of advantage function.

\section{Background}

The N-agent ad hoc teamwork (NAHT) problem is formalized in the framework of Decentralized Partially Observable Markov Decision Processes (Dec-POMDPs) \citep{bernstein2002complexity}. A Dec-POMDP is described by the tuple $\langle M, S, \{O^i\}_{i=1}^M, A, T, r, \gamma \rangle$. Here, $M$ denotes the number of agents, $S$ the global state space, $O^i$ the local observation space for agent $i$, and $A = \prod_{i=1}^{M}A^i$ the joint action space. Let the notation $\Delta(\bullet)$ denote a probability distribution over the space $\bullet$. The transition function $T: S \times A \rightarrow \Delta(S)$ governs the dynamics, and the reward function $r: S \times A \rightarrow \mathbb{R}$ provides a shared reward signal.  In this work, we examine a control scheme where each agent selects actions according to the stochastic policy $\pi^i: H^i \times A^1 \times \cdots A^{i-1} \rightarrow \Delta(A^i)$, where $H^i$ is the fixed length history of observations and actions available to agent $i$. The decision-making problem occurs under the horizon $T$.

The NAHT problem considers a controlled set of agents $C$, and an uncontrolled set $U$. The learning algorithm generates policies for agents in $C$, while uncontrolled agents in $U$ follow fixed, potentially unknown policies. Each episode begins with a team sampling process $X(U, C)$, consisting of two steps: first, the number of controlled agents $N < M$ is sampled, and second, a set of $N$ agents is drawn from the controlled set $C$, while $M-N$ agents are drawn from the uncontrolled set $U$. The NAHT learning problem considers a parameterized set of controlled agents, $C(\theta)$, and aims to maximize the expected return under varying teammates and team compositions: $\max_{\theta} \mathbb{E}_{\{\pi^{i}\}_{i=1}^M \sim X(U, C(\theta))} \left[ \sum_{t=0}^{T} \gamma^t r_t \right].$

A \textbf{trajectory} $\tau = (s_0, a_0, r_0, s_1, a_1, r_1, \ldots, s_T)$ is sampled under the joint policy: at each timestep, the controlled agents select $a_t^c \sim \pi(\cdot \mid h_t^c)$ and the uncontrolled agents select $a_t^u \sim \pi^u(\cdot \mid h_t^u)$, with joint action $a_t = (a_t^c, a_t^u)$, rewards $r_t = R(s_t, a_t)$, and transitions $s_{t+1} \sim T(\cdot \mid s_t, a_t)$. We denote by $h_t^c$ and $h_t^u$ the observation--action histories of the controlled and uncontrolled agents, respectively, up to timestep $t$. Note that throughout all future sections, we abbreviate $a_t^c$ to $a_t$. The symbol $a_t$ denotes the controlled joint action, and $a_t^u$ denotes the uncontrolled joint action. Uncontrolled actions are sampled from $\pi^u$ and marginalized as part of the environment randomness.

We define action-value functions, value functions, and advantage functions at two levels of information access, differing in whether the centralized critic has access to the uncontrolled agents' histories.

The \textbf{joint history action-value function} (or Q-function) is a centralized action-value function conditioned on the full joint history of all agents. For a fixed policy $\pi$, it is defined as:
\begin{equation}
    Q^\pi(h_t^c, h_t^u, a_t) = \mathbb{E}_{\tau \sim \pi \mid h_t^c, h_t^u, a_t}\!\left[\sum_{k=0}^{T-t} \gamma^k\, r_{t+k}\right]
\end{equation}
where the expectation is with respect to the randomness of the trajectory $\tau$ (state transitions, observations, and any stochasticity in $\pi$) from timestep $t$ onward, conditioned on the joint history $(h_t^c, h_t^u)$ and action $a_t$. $Q^\pi(h_t^c, h_t^u, a_t)$ represents the expected discounted return of the team when the joint history $(h_t^c, h_t^u)$ has been observed and the controlled agents take action $a_t$.

The \textbf{controlled history action-value function} is defined as:
\begin{equation}
    Q^\pi(h_t^c, a_t) = \mathbb{E}_{\tau \sim \pi,\, h_t^u \mid h_t^c, a_t}\!\left[\sum_{k=0}^{T-t} \gamma^k\, r_{t+k}\right]
\end{equation}
where the expectation additionally averages over the uncontrolled history $h_t^u$ given $h_t^c$. The two levels are related by:
\begin{equation}
    Q^\pi(h_t^c, a_t) = \mathbb{E}_{h_t^u \mid h_t^c}\!\big[Q^\pi(h_t^c, h_t^u, a_t)\big]
\end{equation}

The joint history and controlled history \textbf{value functions} are defined as:
\begin{equation}
    V^\pi(h_t^c, h_t^u) = \mathbb{E}_{\tau \sim \pi \mid h_t^c, h_t^u}\!\left[\sum_{k=0}^{T-t} \gamma^k\, r_{t+k}\right], \qquad V^\pi(h_t^c) = \mathbb{E}_{\tau \sim \pi,\, h_t^u \mid h_t^c}\!\left[\sum_{k=0}^{T-t} \gamma^k\, r_{t+k}\right]
\end{equation}
Equivalently, these are recovered from the action-value functions by averaging over actions:
\begin{equation}
    V^\pi(h_t^c, h_t^u) = \mathbb{E}_{a_t \sim \pi(\cdot \mid h_t^c)}\!\big[Q^\pi(h_t^c, h_t^u, a_t)\big],
    \qquad V^\pi(h_t^c) = \mathbb{E}_{a_t \sim \pi(\cdot \mid h_t^c)}\!\big[Q^\pi(h_t^c, a_t)\big]
\end{equation}
And they are related by:
\begin{equation}
    V^\pi(h_t^c) = \mathbb{E}_{h_t^u \mid h_t^c}\!\big[V^\pi(h_t^c, h_t^u)\big]
\end{equation}

Finally, we define the joint and controlled history \textbf{advantage functions}:
\begin{equation}
    A^\pi(h_t^c, h_t^u, a_t) = Q^\pi(h_t^c, h_t^u, a_t) - V^\pi(h_t^c, h_t^u),
    \qquad A^\pi(h_t^c, a_t) = Q^\pi(h_t^c, a_t) - V^\pi(h_t^c)
\end{equation}
Note again that:
\begin{equation}
    A^\pi(h_t^c, a_t) = \mathbb{E}_{h_t^u \mid h_t^c}\!\big[A^\pi(h_t^c, h_t^u, a_t)\big]
\end{equation}

\subsection{Proximal Policy Optimization}

Controlled policies are trained with the Proximal Policy Optimization (PPO) algorithm \citep{schulman2017ppo}. Recall that since controlled policies are conditioned only controlled histories, the PPO actor loss is defined as:
\begin{equation}
    \mathcal{L}^{PPO}(\theta) = -\mathbb{E}_{h^c, a \sim \pi_{\theta_{\text{old}}}}\!\left[\min\!\Big(\rho(\theta)\, A(h^c, a),\; \operatorname{clip}\big(\rho(\theta),\, 1 - \epsilon,\, 1 + \epsilon\big)\, A(h^c, a)\Big)\right]
\end{equation}
where $\rho(\theta) = \frac{\pi_\theta(a \mid h^c)}{\pi_{\theta_{\text{old}}}(a \mid h^c)}$ is the importance sampling ratio between the current policy $\pi_\theta$ and the behavior policy $\pi_{\theta_{\text{old}}}$ used to collect the trajectory. Note that the true loss we wish to estimate has no dependence on uncontrolled histories: they have been marginalized out. We care only about choosing the best action given a controlled history. This paper will compare various estimators for $A(h^c, a)$ and demonstrate that estimators utilizing joint history critics (taking into account uncontrolled histories) can be more accurate than those using only controlled histories.

\subsection{Generalized Advantage Estimation}
The advantage $A(h^c, a)$ in the PPO objective is not directly observed and must be estimated from rollouts. A commonly used technique (and the technique used for all experiments in this paper) is Generalized Advantage Estimation (GAE) \citep{schulman2016gae}, which trades off bias and variance through an exponentially-weighted average of $k$-step TD residuals. Given a learned value function $V_\phi$ and a trajectory $\tau$, the one-step TD residual at timestep $t$ is
\begin{equation}
    \delta_t = r_t + \gamma V_\phi(h^c_{t+1}) - V_\phi(h^c_t),
\end{equation}
and the GAE advantage is defined as
\begin{equation}
    \hat{A}^{\text{GAE}(\gamma, \lambda)}_t = \sum_{l=0}^{\infty} (\gamma \lambda)^l \, \delta_{t+l},
\end{equation}
where $\gamma \in [0, 1]$ is the discount factor and $\lambda \in [0, 1]$ controls the bias-variance tradeoff. Setting $\lambda = 0$ recovers the one-step TD estimator $\hat{A}_t = \delta_t$, which has low variance but is biased by errors in $V_\phi$. Setting $\lambda = 1$ recovers the Monte Carlo advantage $\hat{A}_t = \sum_{l=0}^{\infty} \gamma^l r_{t+l} - V_\phi(h^c_t)$, which is unbiased (with respect to the true value function) but suffers from high variance. Intermediate values of $\lambda$ interpolate between these regimes.

Note also that we can recover an estimate of the action-value function from the GAE advantage by adding back the value baseline:
\begin{equation}
    \hat{Q}^{\text{GAE}(\gamma, \lambda)}_t = \hat{A}^{\text{GAE}(\gamma, \lambda)}_t + V_\phi(h^c_t).
\end{equation}

In the standard single-agent setting, the value function $V_\phi$ is conditioned on the same information as the policy. In our multi-agent setting, however, we have a choice: $V_\phi$ may condition on the controlled history $h^c$ alone, or on the joint history $(h^c, h^u)$ that additionally includes uncontrolled agents' observations and actions. Section~\ref{sec:theory} compares these two choices and analyzes how the conditioning of the critic affects the variance of the resulting advantage estimator.

\paragraph{Training and Evaluation in Practice}
Similar to the training and evaluation procedure specified by \citet{wang2024naht}, we assume access to a training and test set of teammates, $X_{\text{train}}$ and $X_{\text{test}}$, which are drawn from the same distribution over teammate types. NAHT algorithms learn from interaction with teammates within the training set $X_{\text{train}}$, while evaluations are conducted with teammates in the test set, $X_{\text{test}}$.

\section{Theoretical Analysis}
\label{sec:theory}

Existing algorithms for NAHT are policy-gradient based approaches, implemented via PPO, so our theoretical analysis examines the bias and variance of advantage estimators. We will show that both estimators are unbiased for the controlled advantage function, and then show that the joint estimator is often lower variance than the controlled estimator.

The following assumptions and definitions are used throughout this section:
\begin{assumption}[Unbiased value function estimators]
\label{ass:unbiased-value}
Let $\hat{V}(h_t^c, h_t^u)$ and $\hat{V}(h_t^c)$ be any unbiased estimators of the joint and controlled value functions, respectively, i.e., any random variables satisfying:
\begin{equation}
    \mathbb{E}_{\tau \sim \pi \mid h_t^c, h_t^u}\!\big[\hat{V}(h_t^c, h_t^u)\big] = V(h_t^c, h_t^u),
    \qquad
    \mathbb{E}_{h_t^u,\, \tau \sim \pi \mid h_t^c}\!\big[\hat{V}(h_t^c)\big] = V(h_t^c)
\end{equation}
This is an idealization standard in the theoretical MARL literature \citep{lyu2023centralized}; in practice it holds approximately for sufficiently expressive critics trained to convergence.
\end{assumption}

\begin{assumption}[Unbiased action-value function estimator]
\label{ass:unbiased-action-value}
Let $\hat{Q}(h_t^c, a_t)$ be any unbiased estimator of the controlled action-value function $Q(h_t^c, a_t)$ i.e., any random variable satisfying:
\begin{equation}
    \mathbb{E}_{h_t^u,\, \tau \sim \pi \mid h_t^c, a_t}\!\big[\hat{Q}(h_t^c, a_t)\big] = Q(h_t^c, a_t)
\end{equation}
This is satisfied with a monte-carlo returns estimator, or a GAE estimator bootstrapping with a value function satisfying Assumption~\ref{ass:unbiased-value}.
\end{assumption}

\begin{definition}[Controlled and joint advantage estimators]
Let $\hat{Q}_t^{(1)}$ and $\hat{Q}_t^{(2)}$ be any two (possibly distinct) estimators satisfying Assumption~\ref{ass:unbiased-action-value}. Note that in practice, $\hat{Q}_t^{(1)}$ will likely be constructed with access to a controlled critic, and $\hat{Q}_t^{(2)}$ will likely be constructed with access to a joint critic.
\begin{equation}
    \hat{A}_{controlled}(a_t, h_t^c) = \hat{Q}_t^{(1)} - \hat{V}(h_t^c), \qquad \hat{A}_{joint}(a_t, h_t^c, h_t^u) = \hat{Q}_t^{(2)} - \hat{V}(h_t^c, h_t^u)
\end{equation}
\end{definition}

\subsection{Bias of Controlled and Joint Estimators}
We first establish conditions under which both controlled and joint advantage estimators are unbiased for the controlled advantage that we wish to estimate as a term of the PPO loss.

\begin{theorem}[Unbiasedness of Advantage Estimators]
\label{theorem:unbiased}
\begin{equation}
    \mathbb{E}_{h_t^u,\, \tau \sim \pi \mid h_t^c, a_t}\!\big[\hat{A}_{controlled}\big] = \mathbb{E}_{h_t^u,\, \tau \sim \pi \mid h_t^c, a_t}\!\big[\hat{A}_{joint}\big] = A(h_t^c, a_t)
\end{equation}
\end{theorem}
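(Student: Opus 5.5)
The plan is to use linearity of expectation to split each estimator into a $\hat{Q}$ term and a $\hat{V}$ term. We then handle the $\hat{V}$ terms with the tower property over $h_t^u$ and the unbiasedness assumptions. Throughout, write $\mathbb{E}_{\mid h_t^c,a_t}$ for $\mathbb{E}_{h_t^u,\,\tau\sim\pi\mid h_t^c,a_t}$.

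\textbf{Step 1: the $\hat{Q}$ terms.} By Assumption~\ref{ass:unbiased-action-value}, $\mathbb{E}_{\mid h_t^c,a_t}[\hat{Q}_t^{(1)}] = \mathbb{E}_{\mid h_t^c,a_t}[\hat{Q}_t^{(2)}] = Q(h_t^c,a_t)$. So it suffices to show that both value estimators have conditional mean $V(h_t^c)$ when we also condition on $a_t$. The advantage identity $A(h_t^c,a_t)=Q(h_t^c,a_t)-V(h_t^c)$ then finishes the proof.

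\textbf{Step 2: the joint baseline.} Apply the tower property:
\[
\mathbb{E}_{\mid h_t^c,a_t}\big[\hat{V}(h_t^c,h_t^u)\big] = \mathbb{E}_{h_t^u\mid h_t^c,a_t}\Big[\mathbb{E}_{\tau\mid h_t^c,h_t^u,a_t}\big[\hat{V}(h_t^c,h_t^u)\big]\Big].
\]
Two facts are needed here.
\begin{itemize}
\item[(i)] $\hat{V}(h_t^c,h_t^u)$ is a critic evaluated at the time-$t$ joint history. Its randomness (critic training noise, independent of the current step) does not depend on the action $a_t$, which is sampled afterwards. Hence its conditional mean given $(h_t^c,h_t^u,a_t)$ equals its mean given $(h_t^c,h_t^u)$, which is $V(h_t^c,h_t^u)$ by Assumption~\ref{ass:unbiased-value}.
\item[(ii)] The controlled action is drawn as $a_t\sim\pi(\cdot\mid h_t^c)$, so $a_t \perp h_t^u \mid h_t^c$. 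Hence the law of $h_t^u$ given $(h_t^c,a_t)$ equals its law given $h_t^c$.
\end{itemize}
Combining these with the stated relation $V(h_t^c)=\mathbb{E}_{h_t^u\mid h_t^c}[V(h_t^c,h_t^u)]$ gives $V(h_t^c)$.

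\textbf{Step 3: the controlled baseline.} The same argument applies. $\hat{V}(h_t^c)$ is evaluated before $a_t$ is sampled, so conditioning additionally on $a_t$ does not change its mean. Its mean is $V(h_t^c)$ by Assumption~\ref{ass:unbiased-value}. Subtracting each baseline from $Q(h_t^c,a_t)$ yields the claim for both estimators.

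\textbf{Main obstacle.} The delicate point is justifying that conditioning on $a_t$ does not disturb the baseline's expectation. Assumption~\ref{ass:unbiased-value} only conditions on histories, not on $a_t$. I would make explicit the interpretation that the estimator $\hat{V}$ is a function of the time-$t$ history plus randomness independent of $a_t$ given that history. I would then invoke the conditional independence $a_t\perp h_t^u\mid h_t^c$ implied by the controlled policy depending only on $h_t^c$. If a baseline estimator could depend on post-$t$ samples (e.g.\ a Monte Carlo estimate of $V$ from the same rollout), this step could fail. In that case I would state the independence as part of the interpretation of Assumption~\ref{ass:unbiased-value}.
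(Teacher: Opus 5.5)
Your proposal is correct and follows essentially the same route as the paper's proof: linearity of expectation, Assumption~\ref{ass:unbiased-action-value} for both $\hat{Q}$ terms, and, for the joint baseline, $p(h_t^u \mid h_t^c, a_t) = p(h_t^u \mid h_t^c)$ together with $V(h_t^c) = \mathbb{E}_{h_t^u \mid h_t^c}[V(h_t^c, h_t^u)]$. Where you go further is in stating explicitly that each baseline's randomness must be independent of $a_t$ given the time-$t$ history. The paper uses this only tacitly: it drops $a_t$ from the conditioning of $\hat{V}(h_t^c)$ and substitutes $V(h_t^c,h_t^u)$ for $\hat{V}(h_t^c,h_t^u)$ without comment. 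The condition is genuinely needed, since Assumption~\ref{ass:unbiased-value} does not condition on $a_t$, and it holds automatically for the deterministic critics of Assumption~\ref{ass:deterministic-value}.
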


The proof is provided in Appendix~\ref{app:proof-unbiased}.
Since both joint and controlled estimators are unbiased, they appear to each be reasonable choices to use in practice.

\subsection{Variance of Controlled and Joint Estimators}

When we examine the variances of joint and controlled estimators, we find that the joint estimator is often preferable.

\begin{definition}
Let $q^{(2)}(h_t^c, h_t^u, a_t) := \mathbb{E}[\hat{Q}_t^{(2)} \mid h_t^c, h_t^u, a_t]$ denote the conditional expectation of the joint-history action-value estimator given the full joint history and action.
\end{definition}

\begin{assumption}[Deterministic value function estimators]
\label{ass:deterministic-value}
The value function estimators $\hat{V}(h_t^c, h_t^u)$ and $\hat{V}(h_t^c)$ are deterministic functions of their inputs. That is, conditioned on $(h_t^c, h_t^u)$ (resp. $h_t^c$), they take a single fixed value with no remaining randomness:
\begin{equation}
    \operatorname{Var}_{\tau \sim \pi \mid h_t^c, h_t^u, a_t}\!\big[\hat{V}(h_t^c, h_t^u)\big] = 0,
    \qquad
    \operatorname{Var}_{h_t^u,\, \tau \sim \pi \mid h_t^c}\!\big[\hat{V}(h_t^c) \big] = 0
\end{equation}
This holds for any critics which are deterministic neural networks, so their outputs vary only across their inputs ($h_t^c, h_t^u$ or $h_t^c$), not across the trajectory rollout randomness conditional on those inputs.
\end{assumption}

We decompose the variance difference between a joint and controlled estimator for $A(h_t^c, a_t)$. Note that the joint estimator has lower variance (and we achieve a ``variance reduction'') when the difference is \textit{positive}.

\begin{theorem}[Advantage Variance Decomposition]
\label{theorem:decomp}
For any fixed $(h_t^c, a_t)$:
\begin{align}
    &\operatorname{Var}_{h_t^u,\, \tau \sim \pi \mid h_t^c, a_t}\!\big[\hat{A}_{controlled}\big] - \operatorname{Var}_{h_t^u,\, \tau \sim \pi \mid h_t^c, a_t}\!\big[\hat{A}_{joint}\big] \nonumber \\
    &\quad = \operatorname{Var}_{h_t^u,\, \tau \sim \pi \mid h_t^c, a_t}\!\big[\hat{Q}_t^{(1)}\big] - \operatorname{Var}_{h_t^u,\, \tau \sim \pi \mid h_t^c, a_t}\!\big[\hat{Q}_t^{(2)}\big] \nonumber \\
    &\qquad + \operatorname{Var}_{h_t^u \mid h_t^c, a_t}\!\big[q^{(2)}(h_t^c, h_t^u, a_t)\big] - \operatorname{Var}_{h_t^u \mid h_t^c, a_t}\!\big[q^{(2)}(h_t^c, h_t^u, a_t) - \hat{V}(h_t^c, h_t^u)\big]
\end{align}
\end{theorem}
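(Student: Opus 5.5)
The plan is to handle the two estimators separately, using Assumption~\ref{ass:deterministic-value} to strip off the value baselines where possible, and then to apply the law of total variance, conditioning on $h_t^u$, for the joint estimator. Throughout, $(h_t^c, a_t)$ is fixed. All variances are taken under $h_t^u, \tau \sim \pi \mid h_t^c, a_t$ unless indicated otherwise, and we assume finite second moments of $\hat{Q}_t^{(1)}$ and $\hat{Q}_t^{(2)}$.

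\textbf{Controlled term.} Since $\hat{V}(h_t^c)$ is a deterministic function of $h_t^c$, it is a constant once $(h_t^c, a_t)$ is fixed. Assumption~\ref{ass:deterministic-value} is stated conditional on $h_t^c$ only, but additionally fixing $a_t$ cannot introduce randomness into a deterministic function of $h_t^c$. Subtracting a constant does not change variance, so
\[
\operatorname{Var}\big[\hat{A}_{controlled}\big] = \operatorname{Var}\big[\hat{Q}_t^{(1)}\big].
\]

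\textbf{Joint term.} The baseline $\hat{V}(h_t^c, h_t^u)$ is not constant, because it varies with $h_t^u$. We therefore apply the law of total variance conditioning on $h_t^u$. Given $(h_t^c, h_t^u, a_t)$, the baseline is fixed, so
\[
\operatorname{Var}\big[\hat{Q}_t^{(2)} - \hat{V}(h_t^c,h_t^u) \,\big|\, h_t^c,h_t^u,a_t\big] = \operatorname{Var}\big[\hat{Q}_t^{(2)} \,\big|\, h_t^c,h_t^u,a_t\big].
\]
The corresponding conditional mean is $q^{(2)}(h_t^c,h_t^u,a_t) - \hat{V}(h_t^c,h_t^u)$. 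Hence
\[
\operatorname{Var}\big[\hat{A}_{joint}\big] = \mathbb{E}_{h_t^u}\Big[\operatorname{Var}\big[\hat{Q}_t^{(2)} \mid h_t^c,h_t^u,a_t\big]\Big] + \operatorname{Var}_{h_t^u}\big[q^{(2)} - \hat{V}(h_t^c,h_t^u)\big].
\]
Applying the law of total variance to $\hat{Q}_t^{(2)}$ itself gives
\[
\operatorname{Var}\big[\hat{Q}_t^{(2)}\big] = \mathbb{E}_{h_t^u}\Big[\operatorname{Var}\big[\hat{Q}_t^{(2)} \mid h_t^c,h_t^u,a_t\big]\Big] + \operatorname{Var}_{h_t^u}\big[q^{(2)}\big].
\]
We use this identity to eliminate the expected conditional variance, obtaining
\[
\operatorname{Var}\big[\hat{A}_{joint}\big] = \operatorname{Var}\big[\hat{Q}_t^{(2)}\big] - \operatorname{Var}_{h_t^u}\big[q^{(2)}\big] + \operatorname{Var}_{h_t^u}\big[q^{(2)} - \hat{V}(h_t^c,h_t^u)\big].
\]
Subtracting this from $\operatorname{Var}[\hat{Q}_t^{(1)}]$ gives exactly the claimed decomposition.

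\textbf{Main obstacle.} The step needing the most care is the bookkeeping of conditioning, specifically three points:
\begin{itemize}
\item The outer variance over $h_t^u$ must be taken under $h_t^u \mid h_t^c, a_t$, which is the distribution appearing in the statement.
\item $q^{(2)}$ must be a well-defined measurable function of $h_t^u$.
\item The baseline $\hat{V}(h_t^c,h_t^u)$ must be measurable with respect to $(h_t^c,h_t^u)$; this is exactly the content of Assumption~\ref{ass:deterministic-value}.
\end{itemize}
Theorem~\ref{theorem:unbiased} is not needed for this identity. It only becomes relevant when interpreting the sign of the difference afterward.
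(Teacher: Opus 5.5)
Your proof is correct, and it reaches the identity by a somewhat different route from the paper's. For the controlled term you argue exactly as the paper does: the deterministic baseline contributes neither variance nor covariance.

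The routes differ on the joint term. The paper expands $\operatorname{Var}[\hat{Q}_t^{(2)} - \hat{V}(h_t^c,h_t^u)]$ bilinearly. It then uses the law of total \emph{covariance}, conditioning on $h_t^u$, to reduce $\operatorname{Cov}[\hat{Q}_t^{(2)}, \hat{V}]$ to $\operatorname{Cov}_{h_t^u}[q^{(2)}, \hat{V}]$. This gives the intermediate form $2\operatorname{Cov}_{h_t^u}[q^{(2)},\hat{V}] - \operatorname{Var}_{h_t^u}[\hat{V}]$. Only at the end does it convert this to $\operatorname{Var}_{h_t^u}[q^{(2)}] - \operatorname{Var}_{h_t^u}[q^{(2)} - \hat{V}]$, via the identity $2\operatorname{Cov}[X,Y]-\operatorname{Var}[Y] = \operatorname{Var}[X]-\operatorname{Var}[X-Y]$. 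You instead apply the law of total variance twice, once to $\hat{Q}_t^{(2)} - \hat{V}$ and once to $\hat{Q}_t^{(2)}$. You then cancel the shared within-$h_t^u$ term $\mathbb{E}_{h_t^u}[\operatorname{Var}(\hat{Q}_t^{(2)} \mid h_t^c,h_t^u,a_t)]$, which lands directly on the stated form with no polarization step.

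Each route has an advantage. Yours makes the meaning of the baseline-cancellation term transparent: the joint baseline leaves the trajectory-level (within-$h_t^u$) variance untouched. It only replaces the between-$h_t^u$ variance $\operatorname{Var}_{h_t^u}[q^{(2)}]$ by $\operatorname{Var}_{h_t^u}[q^{(2)} - \hat{V}]$. The paper's route produces the covariance form $2\operatorname{Cov}[q^{(2)},\hat{V}] - \operatorname{Var}[\hat{V}]$ along the way. That is exactly the form Section~\ref{sec:baseline-cancellation} uses to derive the condition $\rho > \sigma_V/(2\sigma_Q)$; with your route it is recovered by the same one-line identity afterward.

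Both arguments rest on the same fact: given $(h_t^c,a_t)$, $\hat{V}(h_t^c,h_t^u)$ is a function of $h_t^u$ alone. You are also right that Theorem~\ref{theorem:unbiased} plays no role; the paper does not use it either.
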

The variance difference decomposes into two contributions:
\begin{itemize}
    \item $\operatorname{Var}[\hat{Q}_t^{(1)}] - \operatorname{Var}[\hat{Q}_t^{(2)}]$ captures the difference in raw variance between the two action-value estimators. For example, if both estimators are the monte-carlo estimator, then this term is $0$. If both estimators are deep-Q networks, or GAE estimators, then it is more difficult to make rigorous claims about the sign of this term. Note that this term is ``out of control" for the critic, which just exists to subtract a baseline from the action-value estimator. As a result, and because we have estimators for which the term is surely $0$, we focus on the second term for our analysis.
    \item The last two terms, $\operatorname{Var}[q^{(2)}] - \operatorname{Var}[q^{(2)} -\hat{V}(h_t^c, h_t^u)]$, measure how much of the $h_t^u$-variation in $q^{(2)}$ is actually cancelled by subtracting the joint baseline. These terms capture the variance reduction benefit of having a baseline that co-varies with the return across uncontrolled histories. We define this to be the \textbf{baseline-cancellation term}.
\end{itemize}
The proof is provided in Appendix~\ref{app:proof-decomp}.

\begin{assumption}
\label{ass:special-case}
The special case where $\hat{Q}_t^{(2)}$ additionally is an unbiased estimator for the \textit{joint} action-value function:
\begin{equation}
    \mathbb{E}[\hat{Q}_t^{(2)} \mid h_t^c, h_t^u, a_t] = Q(h_t^c, h_t^u, a_t)
\end{equation}
This occurs when $\hat{Q}_t^{(2)}$ is GAE bootstrapped through the  joint history critic $\hat{V}(h_t^c, h_t^u)$.
\end{assumption}

\begin{corollary}[Variance Decomposition Simplification]
Under Assumption~\ref{ass:special-case} and the exact-critic idealization $\hat{V}(h_t^c, h_t^u)=V(h_t^c, h_t^u)$, Theorem~\ref{theorem:decomp} reduces to:
\begin{align}
    &\operatorname{Var}[\hat{A}_{controlled}] - \operatorname{Var}[\hat{A}_{joint}] \nonumber \\
    &\quad = \operatorname{Var}_{h_t^u,\, \tau \sim \pi \mid h_t^c, a_t}[\hat{Q}_t^{(1)}] - \operatorname{Var}_{h_t^u,\, \tau \sim \pi \mid h_t^c, a_t}[\hat{Q}_t^{(2)}]
    \nonumber \\
    &\qquad + \operatorname{Var}_{h_t^u \mid h_t^c, a_t}[Q(h_t^c, h_t^u, a_t)] - \operatorname{Var}_{h_t^u \mid h_t^c, a_t}[A(h_t^c, h_t^u, a_t)]
\end{align}
\end{corollary}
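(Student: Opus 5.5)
The plan is to start from the decomposition in Theorem~\ref{theorem:decomp} and substitute the two special-case facts into its baseline-cancellation term. The first two terms, $\operatorname{Var}[\hat{Q}_t^{(1)}] - \operatorname{Var}[\hat{Q}_t^{(2)}]$, are unaffected by either hypothesis and carry over verbatim. All the work is in identifying $q^{(2)}$ and $q^{(2)} - \hat{V}(h_t^c, h_t^u)$ as functions of $h_t^u$ for fixed $(h_t^c, a_t)$.

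\textbf{Step 1.} By Assumption~\ref{ass:special-case}, for every realization of $h_t^u$ we have
\[
q^{(2)}(h_t^c, h_t^u, a_t) = \mathbb{E}[\hat{Q}_t^{(2)} \mid h_t^c, h_t^u, a_t] = Q(h_t^c, h_t^u, a_t).
\]
These are therefore equal as random variables in $h_t^u$. Hence
\[
\operatorname{Var}_{h_t^u \mid h_t^c, a_t}[q^{(2)}] = \operatorname{Var}_{h_t^u \mid h_t^c, a_t}[Q(h_t^c, h_t^u, a_t)].
\]

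\textbf{Step 2.} Under the exact-critic idealization $\hat{V}(h_t^c, h_t^u) = V(h_t^c, h_t^u)$, which is consistent with Assumption~\ref{ass:deterministic-value}, we get
\[
q^{(2)}(h_t^c, h_t^u, a_t) - \hat{V}(h_t^c, h_t^u) = Q(h_t^c, h_t^u, a_t) - V(h_t^c, h_t^u).
\]
By the definition of the joint history advantage function, the right-hand side is $A(h_t^c, h_t^u, a_t)$. So the last variance term becomes $\operatorname{Var}_{h_t^u \mid h_t^c, a_t}[A(h_t^c, h_t^u, a_t)]$.

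\textbf{Step 3.} Plugging these two identities into Theorem~\ref{theorem:decomp} gives the stated expression.

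The only point needing care is one of measurability and conditioning. The identities must hold pointwise, or at least almost surely, in $h_t^u$ under the conditional law $h_t^u \mid h_t^c, a_t$, rather than merely in expectation, so that the variances over $h_t^u$ coincide. This is immediate because Assumption~\ref{ass:special-case} is stated as a conditional-expectation identity for each $(h_t^c, h_t^u, a_t)$. I expect no real obstacle; the corollary is a direct substitution.
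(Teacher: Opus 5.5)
Your proposal is correct and matches the paper's (implicit) argument. Assumption~\ref{ass:special-case} identifies $q^{(2)}(h_t^c,h_t^u,a_t)$ with $Q(h_t^c,h_t^u,a_t)$, and the exact-critic substitution turns $q^{(2)} - \hat{V}(h_t^c,h_t^u)$ into $Q - V = A(h_t^c,h_t^u,a_t)$, so the baseline-cancellation term of Theorem~\ref{theorem:decomp} becomes $\operatorname{Var}[Q] - \operatorname{Var}[A]$ while the $\hat{Q}$-variance terms carry over unchanged; your check that the deterministic $V(h_t^c,h_t^u)$ keeps Assumption~\ref{ass:deterministic-value} in force is the only extra point needed.
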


\subsection{When Is the Baseline-Cancellation Term Large?}
\label{sec:baseline-cancellation}

We now consider conditions under which the baseline-cancellation term, $\operatorname{Var}[q^{(2)}] - \operatorname{Var}[q^{(2)} - V(h_t^c, h_t^u)]$, is positive, creating a variance reduction when the same action-value estimator is used for each type of critic. For clarity, we restrict attention to the case where $\hat{Q}_t^{(2)}$ satisfies Assumption~\ref{ass:special-case}. In this case, the baseline-cancellation term reduces to:
\begin{equation}
    \operatorname{Var}_{h_t^u \mid h_t^c, a_t}\!\big[Q(h_t^c, h_t^u, a_t)\big] - \operatorname{Var}_{h_t^u \mid h_t^c, a_t}\!\big[A(h_t^c, h_t^u, a_t)\big]
\end{equation}
Using the identity $\operatorname{Var}[Q] - \operatorname{Var}[Q - V] = 2\operatorname{Cov}[Q, V] - \operatorname{Var}[V]$, the baseline-cancellation term is positive if and only if:
\begin{equation}
    \operatorname{Cov}_{h_t^u \mid h_t^c, a_t}\!\big[Q(h_t^c, h_t^u, a_t),\, V(h_t^c, h_t^u)\big] > \frac{1}{2}\operatorname{Var}_{h_t^u \mid h_t^c, a_t}\!\big[V(h_t^c, h_t^u)\big]
\end{equation}
Writing $\sigma_Q = \sqrt{\operatorname{Var}_{h_t^u \mid h_t^c, a_t}[Q(h_t^c, h_t^u, a_t)]}$, $\sigma_V = \sqrt{\operatorname{Var}_{h_t^u \mid h_t^c, a_t}[V(h_t^c, h_t^u)]}$, and $\rho = \operatorname{Corr}_{h_t^u \mid h_t^c, a_t}[Q(h_t^c, h_t^u, a_t),\, V(h_t^c, h_t^u)]$, this becomes:
\begin{equation}
    \rho > \frac{\sigma_V}{2\sigma_Q}
\end{equation}
Intuitively, the joint baseline $V(h_t^c, h_t^u)$ reduces variance by canceling variation in $Q(h_t^c, h_t^u, a_t)$ that is driven by uncontrolled histories rather than by the action itself. For this cancellation to be effective, two conditions must hold:

\paragraph{1. $V$ and $Q$ must move together across uncontrolled histories.} 
\label{para:insight1}
If uncontrolled histories that are generally good for the current policy (high $V$) are also good for the specific action taken (high $Q$), then subtracting $V$ strips out shared variation. High correlation $\rho$ captures this: if $Q$ and $V$ move in lockstep across $h_t^u$, then $Q - V$ is nearly constant in $h_t^u$ and the cancellation is near-perfect. \textbf{Note that if $a_t$ is chosen with probability close to 1} (the policy is nearly deterministic), then $V \approx Q(a_t)$, and the correlation is certain. In contrast, if $a_t$ is chosen with low probability by the current policy, then a correlation (and variance reduction) is unlikely.
\paragraph{2. $V$ must not swing more than $Q$.} Even with perfect correlation, if the baseline varies more than the quantity it is trying to cancel ($\sigma_V > 2\sigma_Q$), subtracting it introduces more variance than it removes. Putting these together, the baseline-cancellation term is most positive when $Q$ and $V$ are highly correlated \emph{and} have comparable magnitudes of variation across uncontrolled histories.

\subsection{From Per-Pair Advantage Variance to Visitation Weighted Advantage Variance}

Our variance analysis thus far fixes a specific state-action pair $(h_t^c, a_t)$ and compares the conditional variance of the two advantage estimators at that pair. For some $(h_t^c, a_t)$, the joint baseline will reduce variance; for others, the condition $\rho > \sigma_V / 2\sigma_Q$ will fail and the joint baseline will increase variance. PPO, however, is intuitively influenced by \emph{aggregate} gradient variance, which is a visitation-weighted average:
\begin{equation}
    \mathbb{E}_{(h_t^c, a_t) \sim \pi_{\theta_{\text{old}}}}\!\Big[\operatorname{Var}\!\big[\hat{A}(h_t^c, a_t)\big]\Big]
\end{equation}
The relevant question is therefore whether this expected variance is lower under $\hat{A}_{joint}$ than under $\hat{A}_{controlled}$.

\paragraph{Intuitive argument.} We conjecture that the aggregate gradient variance is typically lower under the joint critic, i.e.,
\begin{equation}
    \mathbb{E}_{(h_t^c, a_t) \sim \pi_{\theta_{\text{old}}}}\!\Big[\operatorname{Var}\!\big[\hat{A}_{controlled}\big] - \operatorname{Var}\!\big[\hat{A}_{joint}\big]\Big] > 0
\end{equation}
The intuition is that the state-action pairs where the joint critic most effectively reduces variance are precisely the pairs the current policy visits most often (Section~\ref{sec:baseline-cancellation}). We refer to this intuition to explain the empirical greater performance of joint history critics for multiple tasks showcased in Section~\ref{sec:experiments}.

\section{Illustrative Example: A Coordination Game with Variance Reduction}
\label{sec:coordination-example}
To build intuition for when the joint baseline meaningfully reduces variance, we construct a simple coordination game where the joint history critic produces a substantial --- though not complete --- variance reduction over the controlled critic.

\paragraph{Setup.} We consider a game with:
\begin{itemize}
    \item \textbf{Agents:} 2 controlled agents, 4 uncontrolled agents.
    \item \textbf{Action space:} $\{0, 1\}^6$.
    \item \textbf{Uncontrolled policy:} At the start of each episode, the uncontrolled team samples a shared policy uniformly from $\{\text{Mostly 0}, \text{Mostly 1}\}$. Under \emph{Mostly 0}, each uncontrolled agent independently outputs $0$ with probability $0.9$ and $1$ with probability $0.1$. Under \emph{Mostly 1}, each uncontrolled agent independently outputs $0$ with probability $0.25$ and $1$ with probability $0.75$.
    \item \textbf{Observation space:} Each agent observes only its own past actions. No additional observations are available, and the optimal policy does not condition on any observation.
    \item \textbf{Reward:} $r(\mathbf{a}) = 1$ if all six agents output the same value (either all zeros or all ones), and $r(\mathbf{a}) = 0$ otherwise.
    \item \textbf{Horizon and discount:} Episode length $T$, discount $\gamma = 1$.
\end{itemize}

We analyze a controlled policy that plays $\{0, 0\}$ with probability $0.8$ and $\{1, 1\}$ with probability $0.2$ at every history, and we focus on the advantage of the action $a_t = \{0,0\}$. We take $\hat{Q}_t$ to be a single-step TD estimator that bootstraps through the appropriate critic:
\begin{equation}
    \hat{Q}_t^{(1)} = r_t + V(h_{t+1}^c), \qquad \hat{Q}_t^{(2)} = r_t + V(h_{t+1}^c, h_{t+1}^u)
\end{equation}

\paragraph{Value functions.} Let $p_0 = 0.8$ and $p_1 = 0.2$ denote the controlled policy's probabilities of playing $\{0,0\}$ and $\{1,1\}$, respectively. Under each uncontrolled policy, we compute the expected per-step reward:
\begin{align}
    \mathbb{E}[r \mid \text{Mostly 0}] &= p_0 \cdot 0.9^4 + p_1 \cdot 0.1^4 = 0.8(0.6561) + 0.2(0.0001) = 0.5249 \\
    \mathbb{E}[r \mid \text{Mostly 1}] &= p_0 \cdot 0.25^4 + p_1 \cdot 0.75^4 = 0.8(0.00391) + 0.2(0.3164) = 0.0665
\end{align}
Marginalizing over the uncontrolled policy, $\mathbb{E}[r] = 0.5(0.5249) + 0.5(0.0665) = 0.2957$. The value functions at timestep $t$ (with $K = T - t$ steps remaining) are these per-step expectations scaled by $K$:
\begin{equation}
    V(h_t^c) = K \cdot 0.2957, \qquad V(h_t^c, h_t^{M_0}) = K \cdot 0.5249, \qquad V(h_t^c, h_t^{M_1}) = K \cdot 0.0665
\end{equation}
where $h_t^{M_0}$ and $h_t^{M_1}$ denote uncontrolled histories consistent with Mostly 0 and Mostly 1, respectively. We also compute the controlled action-value for $a_t = \{0,0\}$. At the current step, the expected reward given $a_t = \{0,0\}$ depends on the uncontrolled policy:
\begin{equation}
    \begin{aligned}
    \mathbb{E}[r \mid a_t = \{0,0\}, \text{Mostly 0}] &= 0.9^4 = 0.6561, \\
    \mathbb{E}[r \mid a_t = \{0,0\}, \text{Mostly 1}] &= 0.25^4 \approx 0.00391
    \end{aligned}
\end{equation}
Marginalizing: $\mathbb{E}[r \mid a_t = \{0,0\}] = 0.5(0.6561) + 0.5(0.00391) \approx 0.330$. Adding the expected value of the remaining $K - 1$ steps:
\begin{equation}
    Q(h_t^c, \{0,0\}) = 0.330 + (K-1) \cdot 0.2957
\end{equation}
The true controlled advantage is therefore:
\begin{equation}
    A(h_t^c, \{0,0\}) = Q(h_t^c, \{0,0\}) - V(h_t^c) = 0.330 - 0.2957 = 0.0343
\end{equation}
which is positive, reflecting that $\{0,0\}$ is a better-than-average action under the current policy.

\paragraph{Advantage estimators.} Substituting the value functions:
\begin{equation}
    \hat{A}_{controlled} = r_t + V(h_{t+1}^c) - V(h_t^c) = r_t + (K-1)(0.2957) - K(0.2957) = r_t - 0.2957
\end{equation}
For the joint history estimator, the form depends on the uncontrolled policy in effect:
\begin{align}
    \hat{A}_{joint} \mid \text{Mostly 0} &= r_t + V(h_{t+1}^c, h_{t+1}^{M_0}) - V(h_t^c, h_t^{M_0}) = r_t - 0.5249 \\
    \hat{A}_{joint} \mid \text{Mostly 1} &= r_t + V(h_{t+1}^c, h_{t+1}^{M_1}) - V(h_t^c, h_t^{M_1}) = r_t - 0.0665
\end{align}
Both estimators are unbiased for $A(h_t^c, \{0,0\}) = 0.0343$: for instance, the joint estimator gives $\mathbb{E}[\hat{A}_{joint}] = 0.5(0.6561 - 0.5249) + 0.5(0.00391 - 0.0665) = 0.5(0.1312) + 0.5(-0.0626) = 0.0343$, as required.

\paragraph{Variance computation.} All variances are taken with respect to the joint distribution over the uncontrolled policy $\theta^u \in \{\text{Mostly 0}, \text{Mostly 1}\}$ and the realized reward $r_t \in \{0, 1\}$ given $a_t = \{0, 0\}$. There are four possible outcomes. We first enumerate their probabilities:
\begin{align}
    \Pr(\text{Mostly 0},\, r_t = 1) &= 0.5 \cdot 0.6561 = 0.3281 \\
    \Pr(\text{Mostly 0},\, r_t = 0) &= 0.5 \cdot 0.3439 = 0.1720 \\
    \Pr(\text{Mostly 1},\, r_t = 1) &= 0.5 \cdot 0.00391 = 0.00195 \\
    \Pr(\text{Mostly 1},\, r_t = 0) &= 0.5 \cdot 0.99609 = 0.4980
\end{align}

\emph{Controlled advantage estimator.} Recall $\hat{A}_{controlled} = r_t - 0.2957$, which takes value $0.7043$ when $r_t = 1$ and $-0.2957$ when $r_t = 0$. Its mean is $A(h_t^c, \{0,0\}) = 0.0343$. Computing the variance as expected squared deviation from the mean:
\begin{align}
    \operatorname{Var}\!\big[\hat{A}_{controlled}\big] &= \Pr(r_t = 1) \cdot (0.7043 - 0.0343)^2 + \Pr(r_t = 0) \cdot (-0.2957 - 0.0343)^2 \nonumber \\
    &= 0.330 \cdot (0.6700)^2 + 0.670 \cdot (-0.330)^2 \nonumber \\
    &= 0.330 \cdot 0.4489 + 0.670 \cdot 0.1089 \nonumber \\
    &\approx 0.221
\end{align}

\emph{Joint history advantage estimator.} Recall that $\hat{A}_{joint} = r_t - 0.5249$ under Mostly 0 and $\hat{A}_{joint} = r_t - 0.0665$ under Mostly 1. The four possible values of $\hat{A}_{joint}$ are therefore:
\begin{align}
    \hat{A}_{joint}(\text{Mostly 0}, r_t = 1) &= 1 - 0.5249 = 0.4751 \\
    \hat{A}_{joint}(\text{Mostly 0}, r_t = 0) &= 0 - 0.5249 = -0.5249 \\
    \hat{A}_{joint}(\text{Mostly 1}, r_t = 1) &= 1 - 0.0665 = 0.9335 \\
    \hat{A}_{joint}(\text{Mostly 1}, r_t = 0) &= 0 - 0.0665 = -0.0665
\end{align}
Its mean is $0.0343$. Computing the variance as the expected squared deviation from the mean:
\begin{align}
    \operatorname{Var}\!\big[\hat{A}_{joint}\big] &= 0.3281 \cdot (0.4751 - 0.0343)^2 + 0.1720 \cdot (-0.5249 - 0.0343)^2 \nonumber \\
    &\quad + 0.00195 \cdot (0.9335 - 0.0343)^2 + 0.4980 \cdot (-0.0665 - 0.0343)^2 \nonumber \\
    &= 0.3281 \cdot (0.4408)^2 + 0.1720 \cdot (-0.5592)^2 \nonumber \\
    &\quad + 0.00195 \cdot (0.8992)^2 + 0.4980 \cdot (-0.1008)^2 \nonumber \\
    &= 0.3281 \cdot 0.1943 + 0.1720 \cdot 0.3127 + 0.00195 \cdot 0.8086 + 0.4980 \cdot 0.01016 \nonumber \\
    &\approx 0.124
\end{align}

\paragraph{Result.} The joint history baseline nearly halves the variance of the advantage estimator:
\begin{equation}
    \operatorname{Var}\!\big[\hat{A}_{controlled}\big] \approx 0.221, \qquad \operatorname{Var}\!\big[\hat{A}_{joint}\big] \approx 0.124
\end{equation}
This matches our first insight about variance reduction. For this scenario, the action $\{0,0\}$ receives a high reward when paired with the same types of uncontrolled agents that the policy performs well for: Mostly 0. Action $\{0,0\}$ is also currently chosen with high probability, as we note can be beneficial for variance reduction in insight 1.

\paragraph{Negative Example}
Note that if we change the conditions of this example, we can end up with a scenario where the joint estimator has greater variance than the controlled estimator. Such an example is provided in Appendix~\ref{AppendixA} to help readers gain intuition about the differences between estimators.

\section{Empirical Results}
\label{sec:experiments}

\caroline{An experimental section should follow the following general structure: (1) enumerate research questions and foreshadow results, (2) present methods examined, baselines, metrics, (3) present results for each RQ, section by section.}

This section addresses the primary research question of if and when performance differs between joint and controlled critics in the NAHT problem. We compare performance across a variety of games and previously proposed algorithms for NAHT. Across every (game, algorithm) pair, a joint critic is as effective as a controlled critic, and for many scenarios, a joint critic exhibits significantly better performance.

\subsection{Algorithms}
As a first solution method for NAHT, \citet{wang2024naht} proposed the Policy Optimization with Agent Modeling (POAM) algorithm. POAM deals with a potentially varying number of controlled agents by employing an independent policy gradient approach, where each agent learns an independent policy while performing teammate modeling of all other agents in the team. Each agent's policy conditions on the learned teammate representations, thus enabling the policy to approximate the optimal behavior for collaborating with each type of uncontrolled team. Since we study centralized critics in this paper, we adapt POAM to include centralized observations in the same fashion as \citet{wang2025matnaht}, creating the method Cent Obs POAM. We simply refer to the method as "POAM" throughout the paper.

\citet{wang2025matnaht} proposed MAT-NAHT, a transformer-based method adapted from the Multi-Agent Transformer~\citep{wen2022multi}, in which an encoder-decoder transformer jointly serves as the centralized policy and value network for the controlled subteam. The architecture handles the variable number of controlled agents through attention masking, and the centralized critic is conditioned on the history of observations and actions of all controlled agents. We abbreviate the method to MAT for this paper.

\subsection{Connection to Theory}
All experiments were conducted in an environment that satisfies all assumptions from Section~\ref{sec:theory}. Specifically, controlled critic training runs use GAE estimation and boostrap through the controlled critic while joint critic training runs use GAE estimation bootstrapped through a joint critic. This satisfies Assumption~\ref{ass:unbiased-action-value}, Assumption~\ref{ass:deterministic-value}, and also the stronger condition Assumption~\ref{ass:special-case}. As noted previously Assumption~\ref{ass:unbiased-value} is satisfied when the value networks have converged, and is a standard assumption in previous literature.

\subsection{Coordination Game}
\begin{figure}[h]
  \centering
  \begin{subfigure}[t]{0.48\textwidth}
    \includegraphics[width=\textwidth]{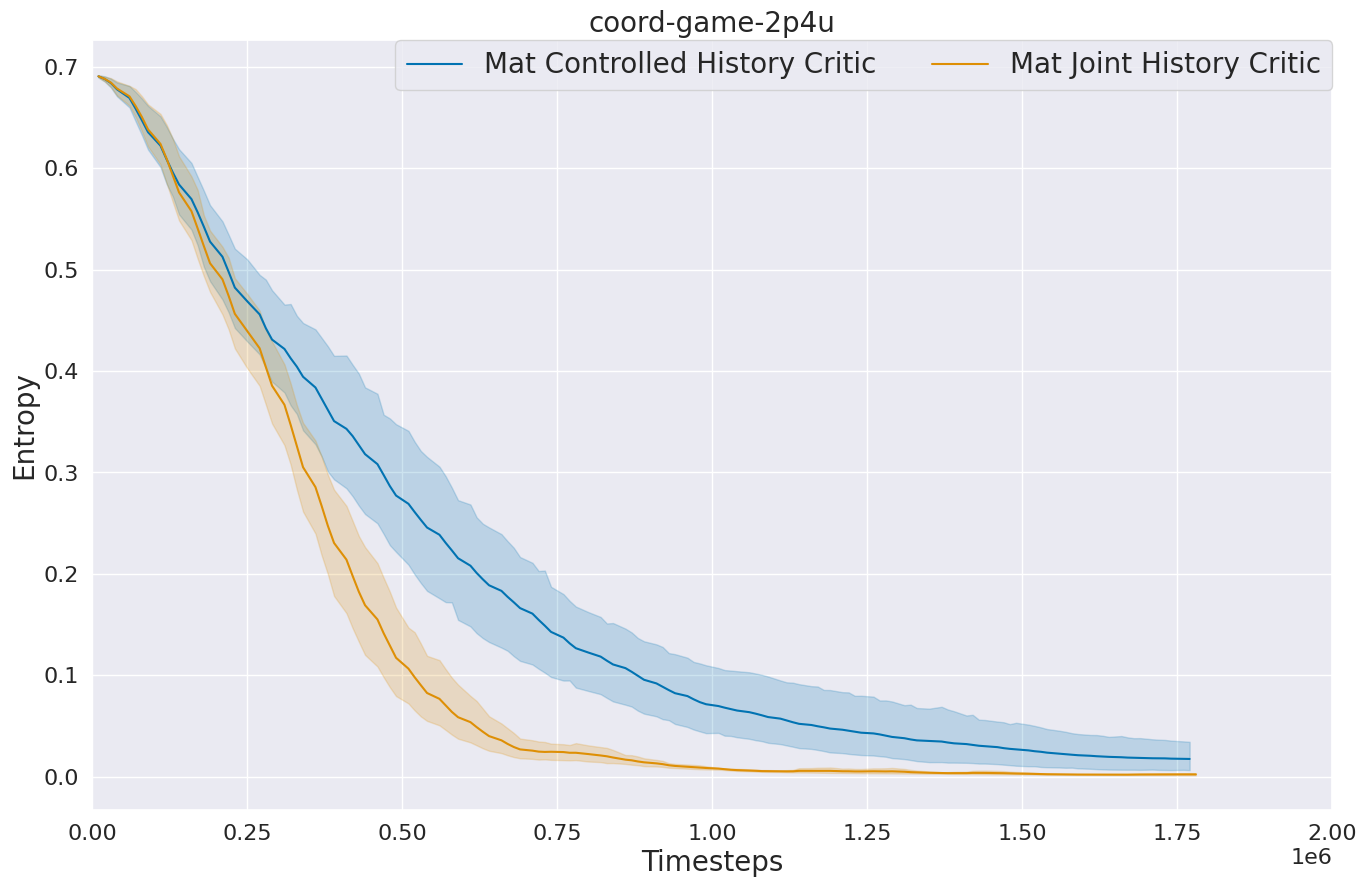}
    \caption{Entropy of MAT policies.}
    \label{fig:coord-mat}
  \end{subfigure}
  \hfill
  \begin{subfigure}[t]{0.48\textwidth}
    \includegraphics[width=\textwidth]{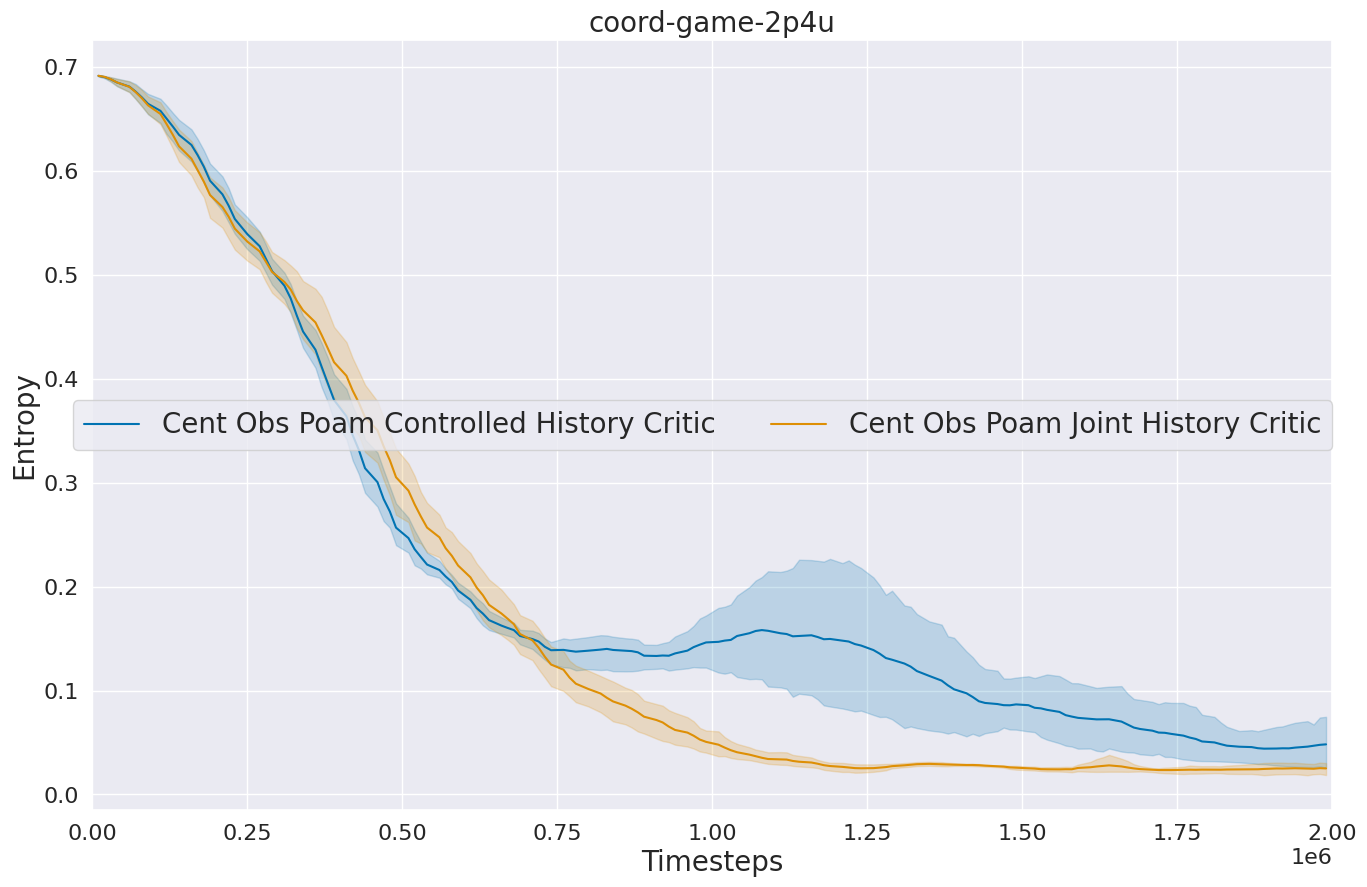}
    \caption{Entropy of POAM policies.}
    \label{fig:coord-poam}
  \end{subfigure}

  \vspace{0.5em}

  \begin{subfigure}[t]{0.48\textwidth}
    \includegraphics[width=\textwidth]{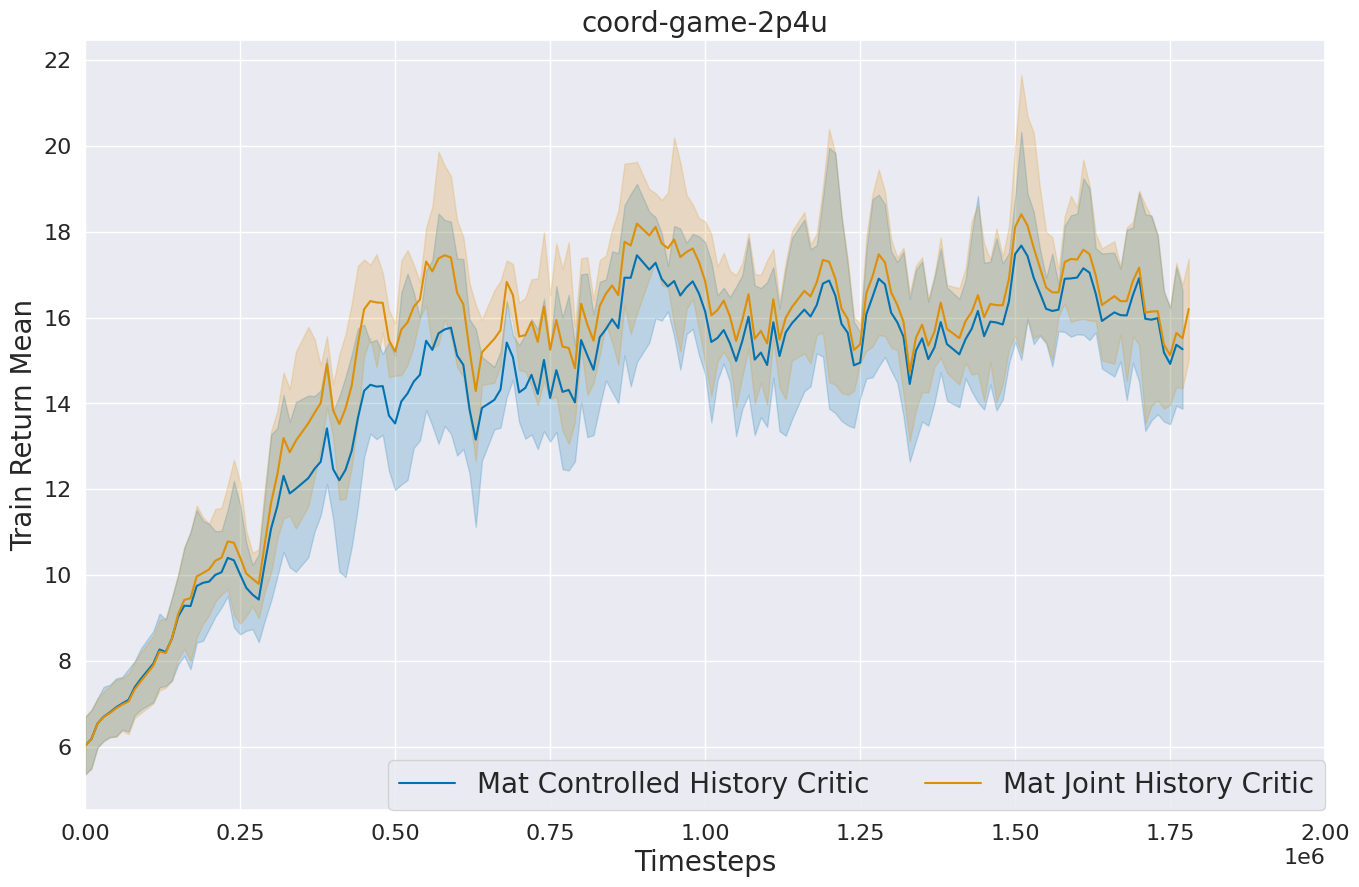}
    \caption{Return curves of MAT policies.}
    \label{fig:return-mat}
  \end{subfigure}
  \hfill
  \begin{subfigure}[t]{0.48\textwidth}
    \includegraphics[width=\textwidth]{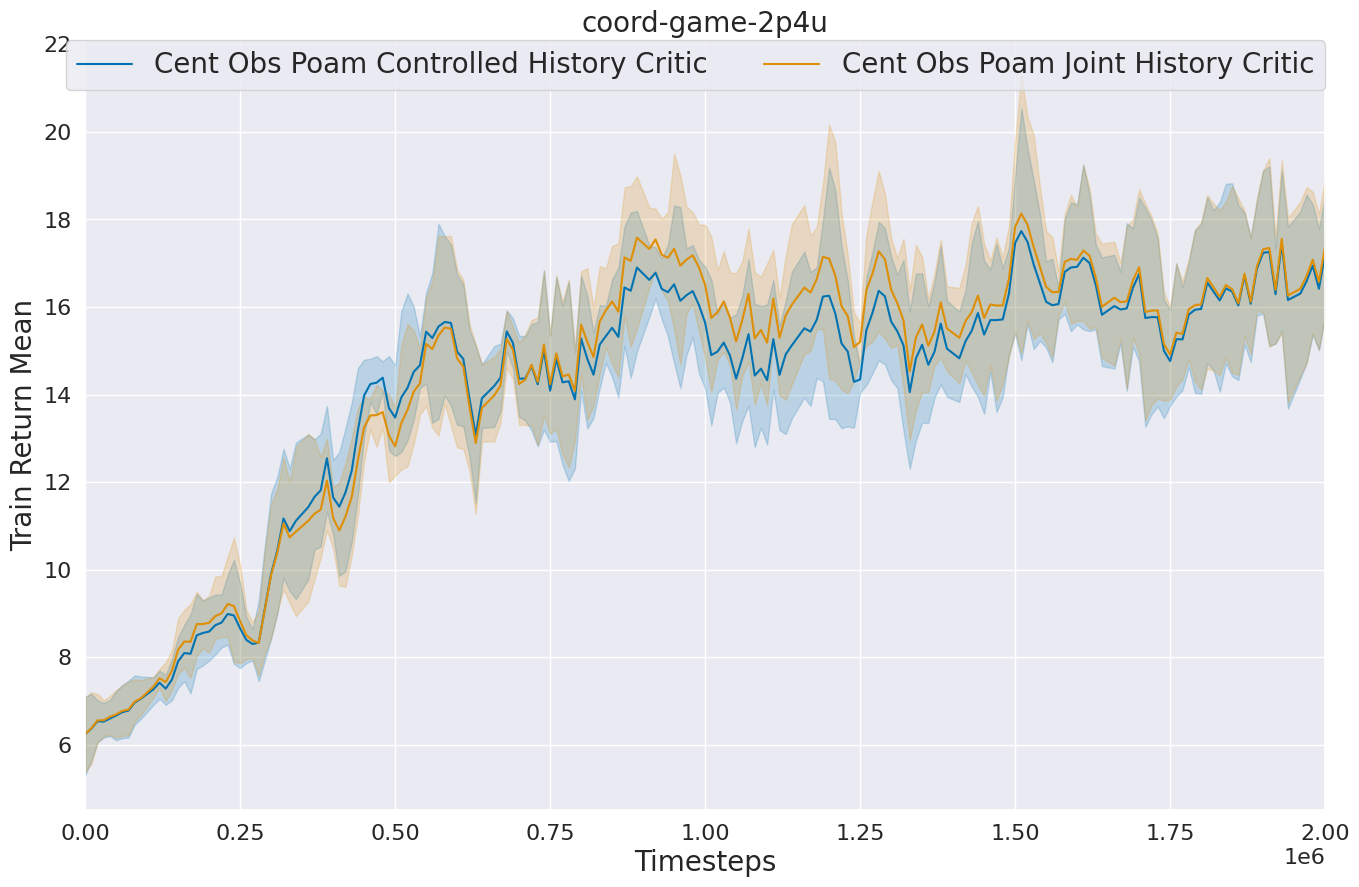}
    \caption{Return curves of POAM policies.}
    \label{fig:return-poam}
  \end{subfigure}

  \caption{Joint history critics have significantly higher sampling efficiency than controlled history critics for coordination game. \caroline{figure text should be larger to be a similar scale to captions.}}
  \label{fig:experiments-grid}
\end{figure}

We first present results for the coordination game described previously. Recall that the optimal policy is to always output $\{0, 0\}$ for every history, so the optimal policy has $0$ entropy. We can see that for the entropy curves of both MAT and POAM, the joint history critic achieves a near-optimal, $0$ entropy policy significantly faster than the controlled history critic. The same story is reflected in the return curves, which demonstrate the greater sampling efficiency of a joint history critic for this task.

\subsection{Stag Hunt}
\caroline{For each game presented, provide a graphic illustrating the game. e.g. for coordination game, you can provide game payoffs as a graphic . For stag hunt, provide a visualization of the gridworld.}
In Stag Hunt \citep{bohmer2020deep}, $N$ predator agents must capture stags and hares in a grid world. Agents and prey can move along the four cardinal directions or stay still, and agents additionally have a \texttt{catch} action. Stags yield a high reward but require coordinated capture: at least two adjacent agents must simultaneously execute the \texttt{catch} action to capture a stag, while a single adjacent agent suffices to capture a hare. If an agent attempts to capture a stag without sufficient teammates adjacent to the same prey, the team is penalized by the miscapture punishment. Upon a successful capture, the participating agents are frozen for the remainder of the episode and removed from the grid, and the captured prey is removed. Agents observe a $(2k+1) \times (2k+1)$ patch centered on their position, while prey movement is governed by a fixed stochastic policy: prey move to a random unoccupied adjacent cell. An episode terminates when all prey are captured, all agents are frozen, or after $T$ timesteps.

\begin{figure}[h]
  \centering
\begin{subfigure}[t]{0.48\textwidth}
    \includegraphics[width=\textwidth]{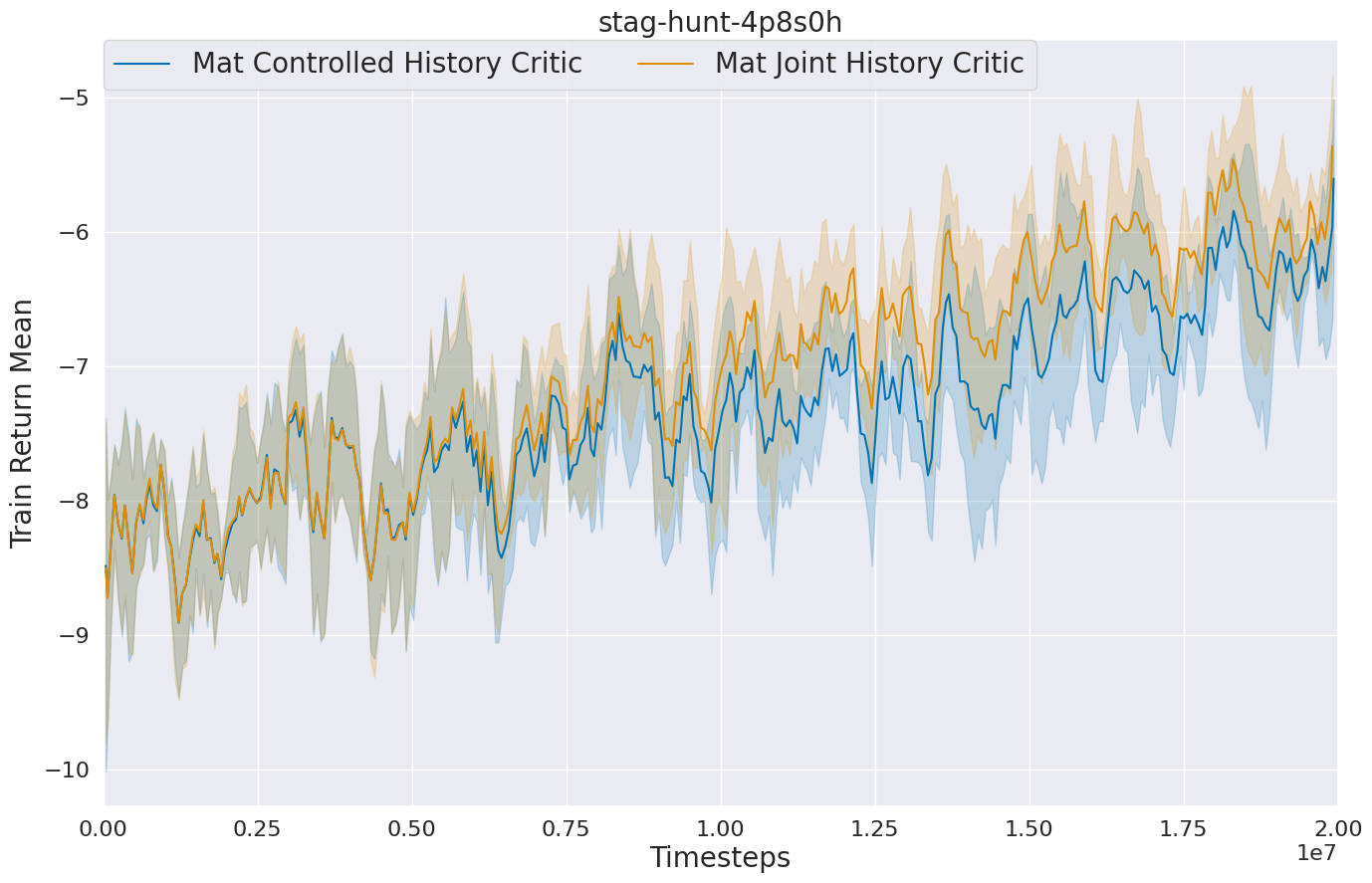}
    \caption{Return curves of MAT policies.}
    \label{fig:stag-return-mat}
  \end{subfigure}
  \hfill
  \begin{subfigure}[t]{0.48\textwidth}
    \includegraphics[width=\textwidth]{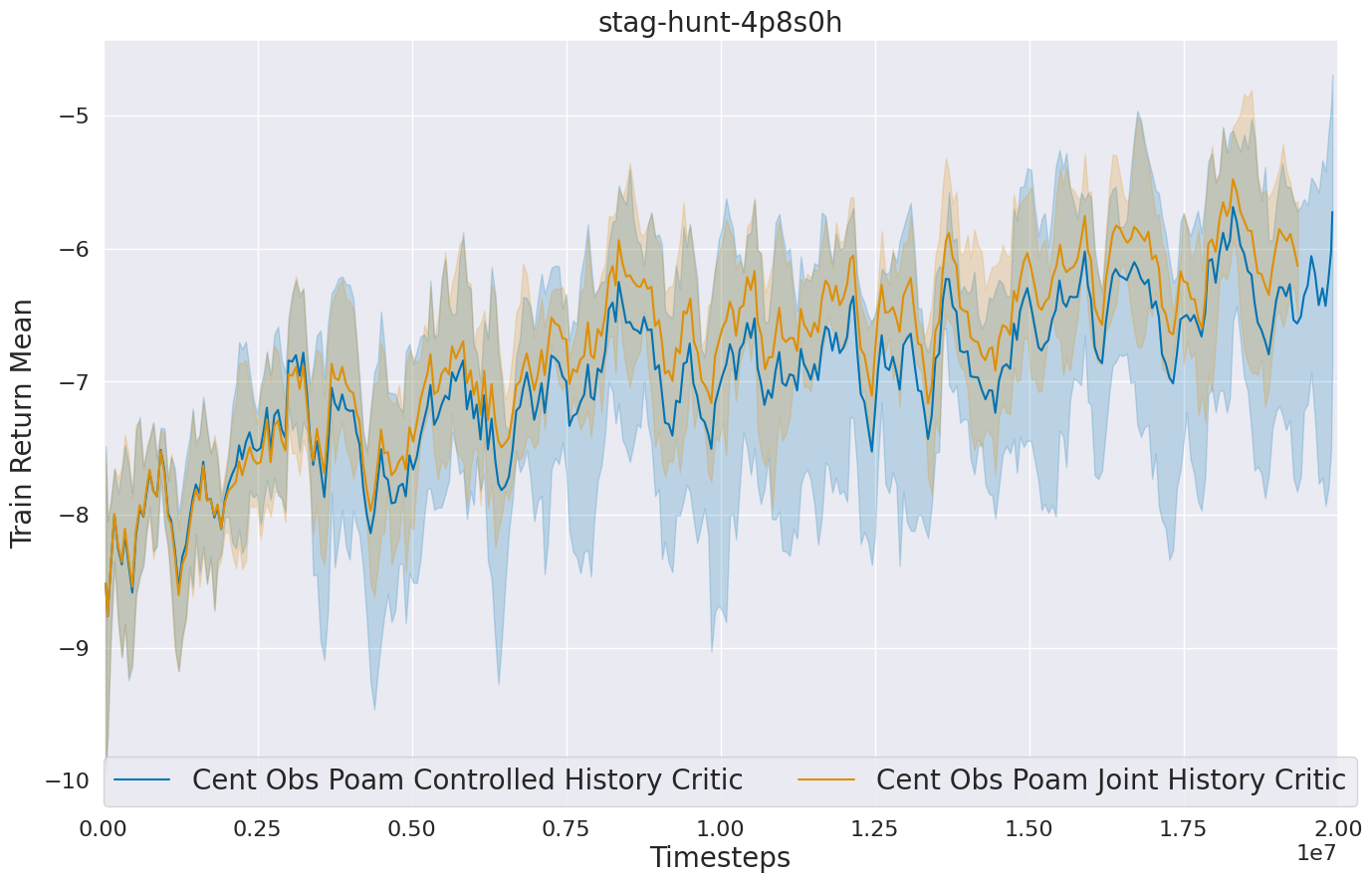}
    \caption{Return curves of POAM policies.}
    \label{fig:stag-return-poam}
  \end{subfigure}
  \hfill
\begin{subfigure}[t]{0.48\textwidth}
    \includegraphics[width=\textwidth]{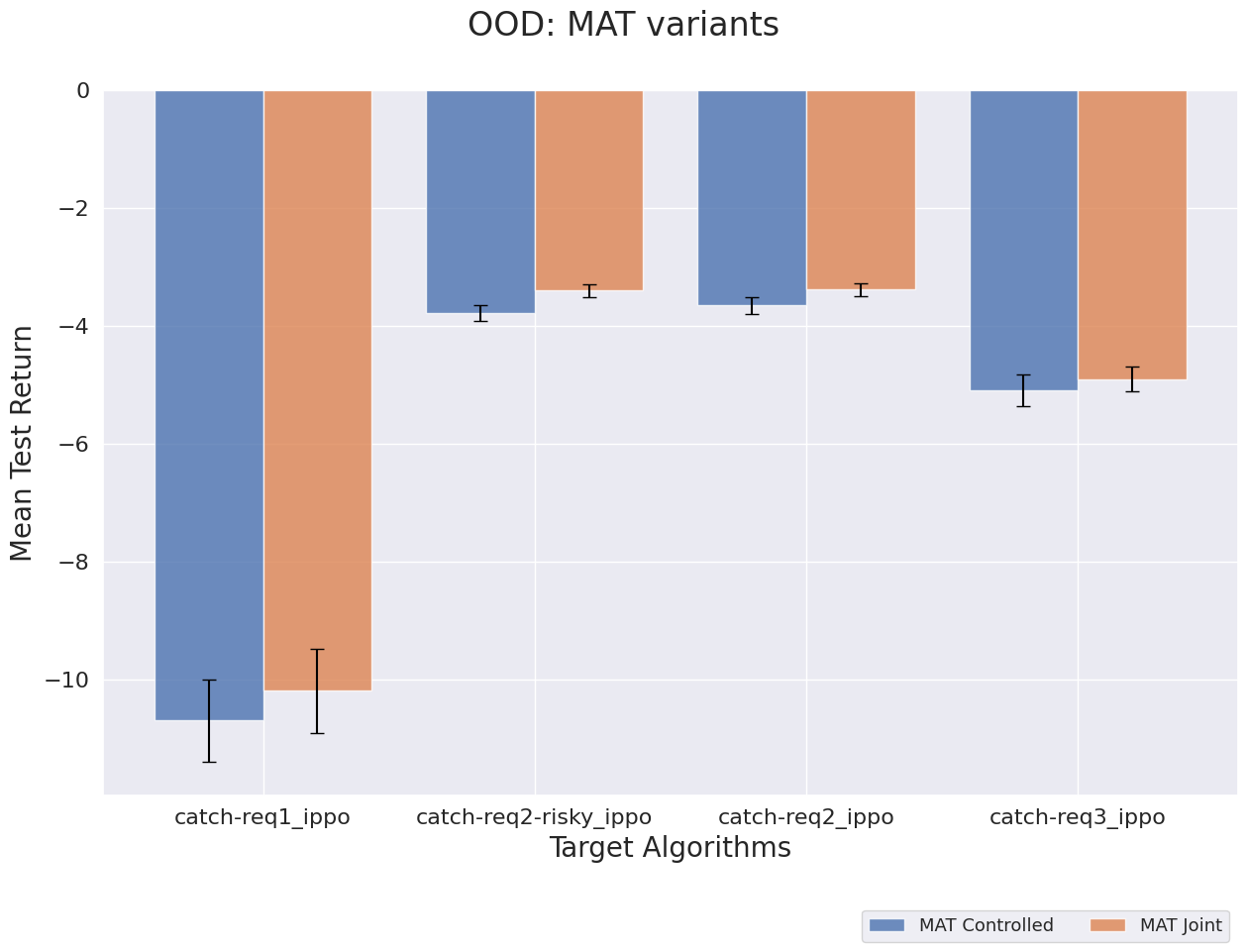}
    \caption{Returns with fixed uncontrolled agent policies (MAT). \caroline{This figure makes it look like there's little difference between controlled and joint history critics. I suggest instead visualizing the difference between the returns of the two methods.}}
    \label{fig:stag-mat}
  \end{subfigure}
  \hfill
  \begin{subfigure}[t]{0.48\textwidth}
    \includegraphics[width=\textwidth]{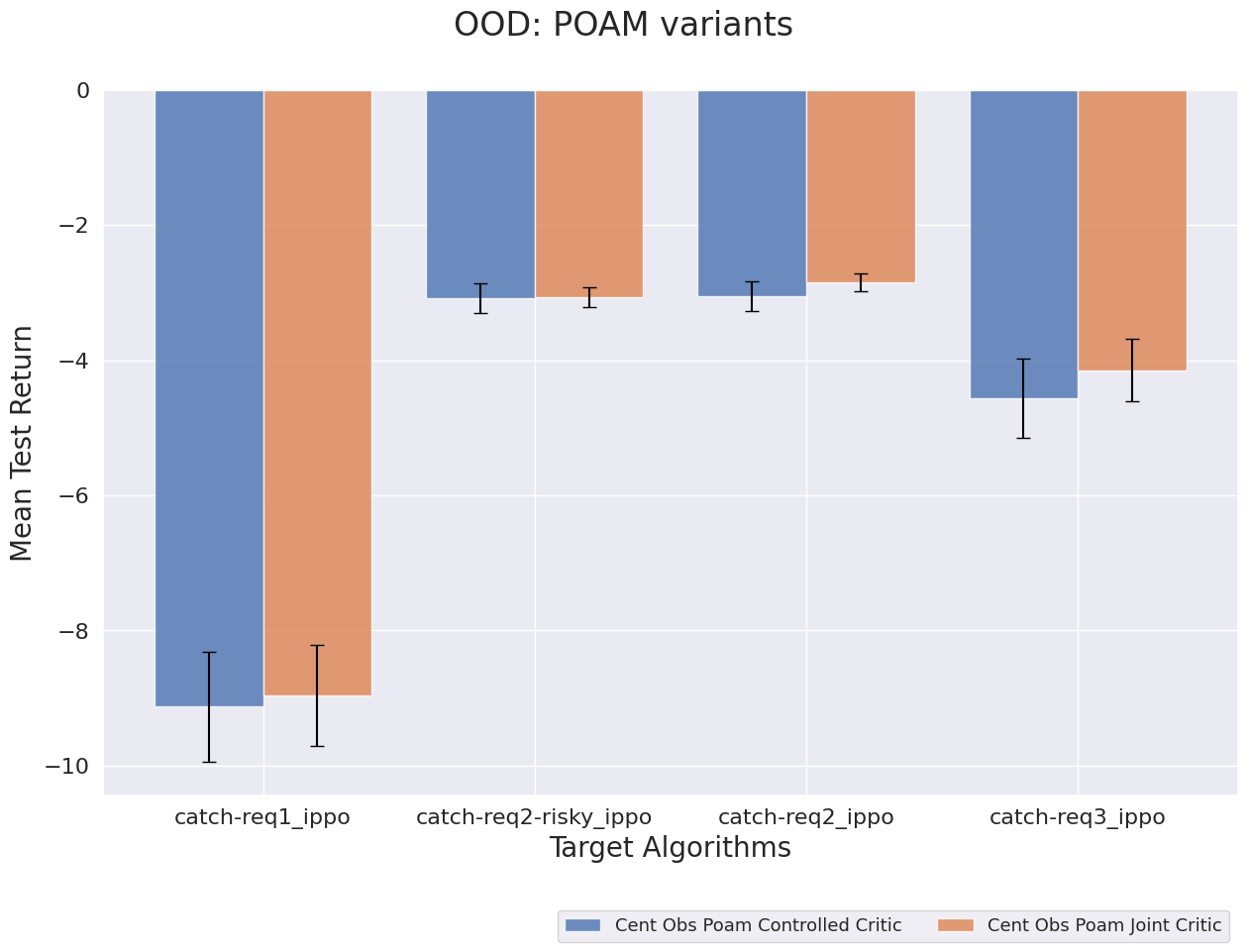}
    \caption{Returns with fixed uncontrolled agent policies (POAM).}
    \label{fig:stag-poam}
  \end{subfigure}
  \caption{Comparison of stag hunt return curves and out of distribution performance across MAT and POAM variants.}
  \label{fig:stag-experiments}
\end{figure}

From the return curves we see that, like coordination game, joint critics offer improved sample efficiency for both MAT and POAM. From the bar charts, we can compare joint and controlled critic policies with individual uncontrolled agents \caroline{the bar plots for stag hunt are against OOD agents, while the learning curves are for in distribution agents. Let's separately analyze the variance reduction for these two cases. I would expect approximation error to be higher against OOD teammates. How would this affect the claimed variance reduction? }. We see that joint critics perform better with every type of uncontrolled agent.

\begin{table}[ht]
\centering
\caption{Stag Hunt MAT Hypothesis Tests.}
\label{tab:paired_results}
\begin{tabular}{lrr}
\hline
Target algorithm & Mean difference (C--J) & Paired $p$-value \\
\hline
catch-req1\_ippo$^{*}$  & -0.501652 & 0.010758\\
catch-req2-risky\_ippo$^{***}$  & -0.376259	 & 0.000388 \\
catch-req2\_ippo$^{*}$  & -0.272097	 & 0.017871 \\
catch-req3\_ippo   & -0.195542	  & 0.426157 \\
\hline
\end{tabular}
\end{table}

We use hypothesis tests across all (controlled, uncontrolled) seed pairs for each comparison to see if the difference in performance between joint and controlled critics is statistically significant. At $\alpha=0.05$, we find statistically significant differences for 3 of the 4 uncontrolled agent types. Note that for the 4th uncontrolled agent type, the mean returns are still greater for a joint critic, and we do not have data to suggest that controlled critics are better for this type of agent. In order to make a statistically significant claim for the superiority of one critic over another, more data would need to be collected.

\subsection{SMAC 8v9}
The SMAC 8v9 challenge \citep{samvelyan2019smac} is a hard-difficulty StarCraft II micromanagement scenario in which $N=8$ allied Marines must defeat a slightly larger enemy team of 9 Marines controlled by the built-in game AI. At each timestep, every allied agent selects from a discrete action set consisting of move (in the four cardinal directions), stop, no-op, or attack-target actions, where the attack actions are restricted to enemy units within the agent's fixed shooting range. Agents observe a partial view of the battlefield limited to a circular sight range, receiving features such as relative positions, health, shield, and unit type of allies and enemies within range; units outside this range are masked from observation. The team receives a shaped reward based on damage dealt to enemies, kills, and a large terminal bonus for winning the engagement, with success requiring coordinated focus-fire and positioning to overcome the numerical disadvantage. An episode terminates when all units on one side are eliminated or after $T$ timesteps, and the scenario is considered solved when the allied team achieves a high win rate against the scripted opponent.

\begin{figure}[h]
  \centering
\begin{subfigure}[t]{0.48\textwidth}
    \includegraphics[width=\textwidth]{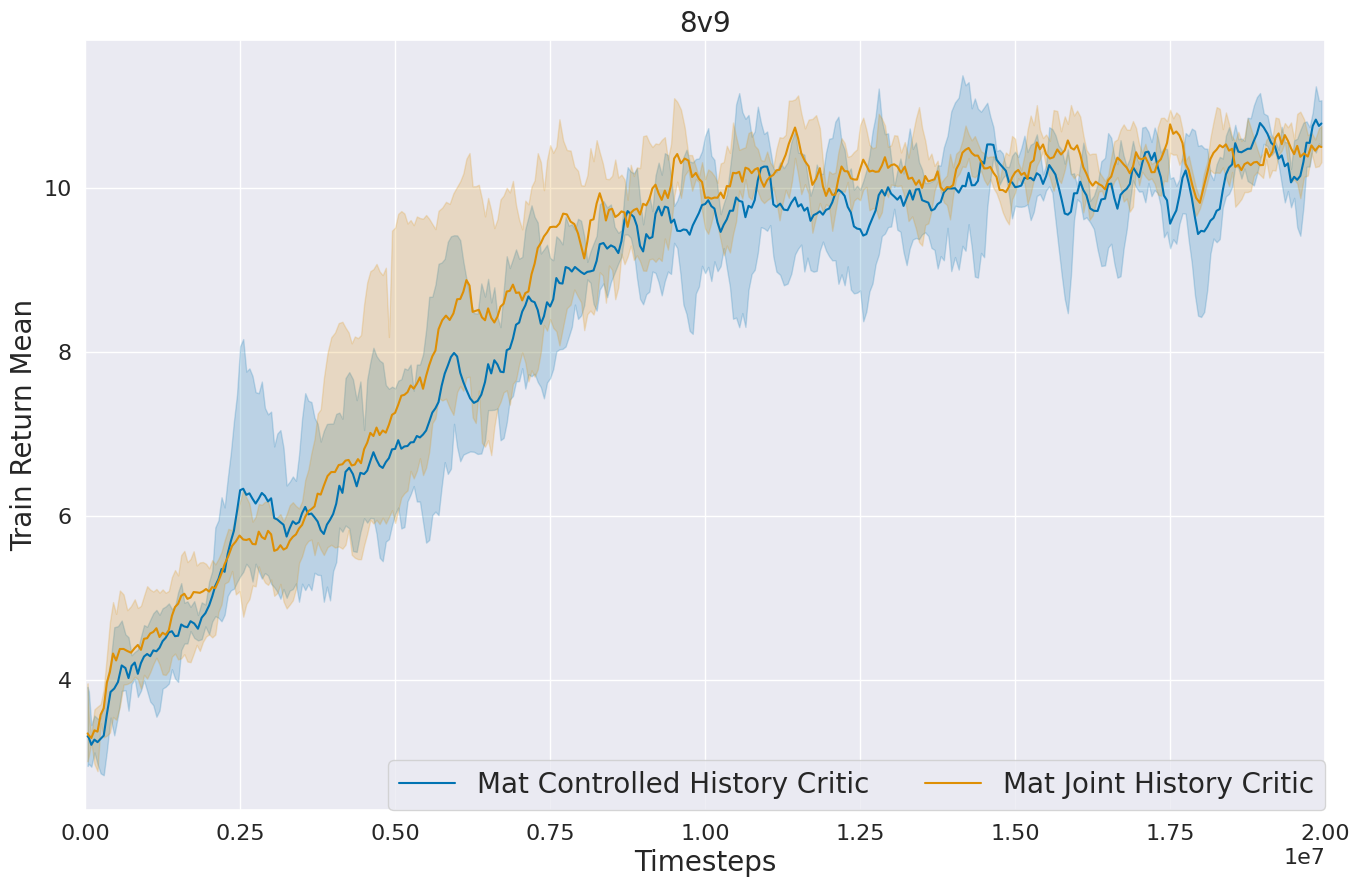}
    \caption{Return curves of MAT policies.}
    \label{fig:smac-return-mat}
  \end{subfigure}
  \hfill
  \begin{subfigure}[t]{0.48\textwidth}
    \includegraphics[width=\textwidth]{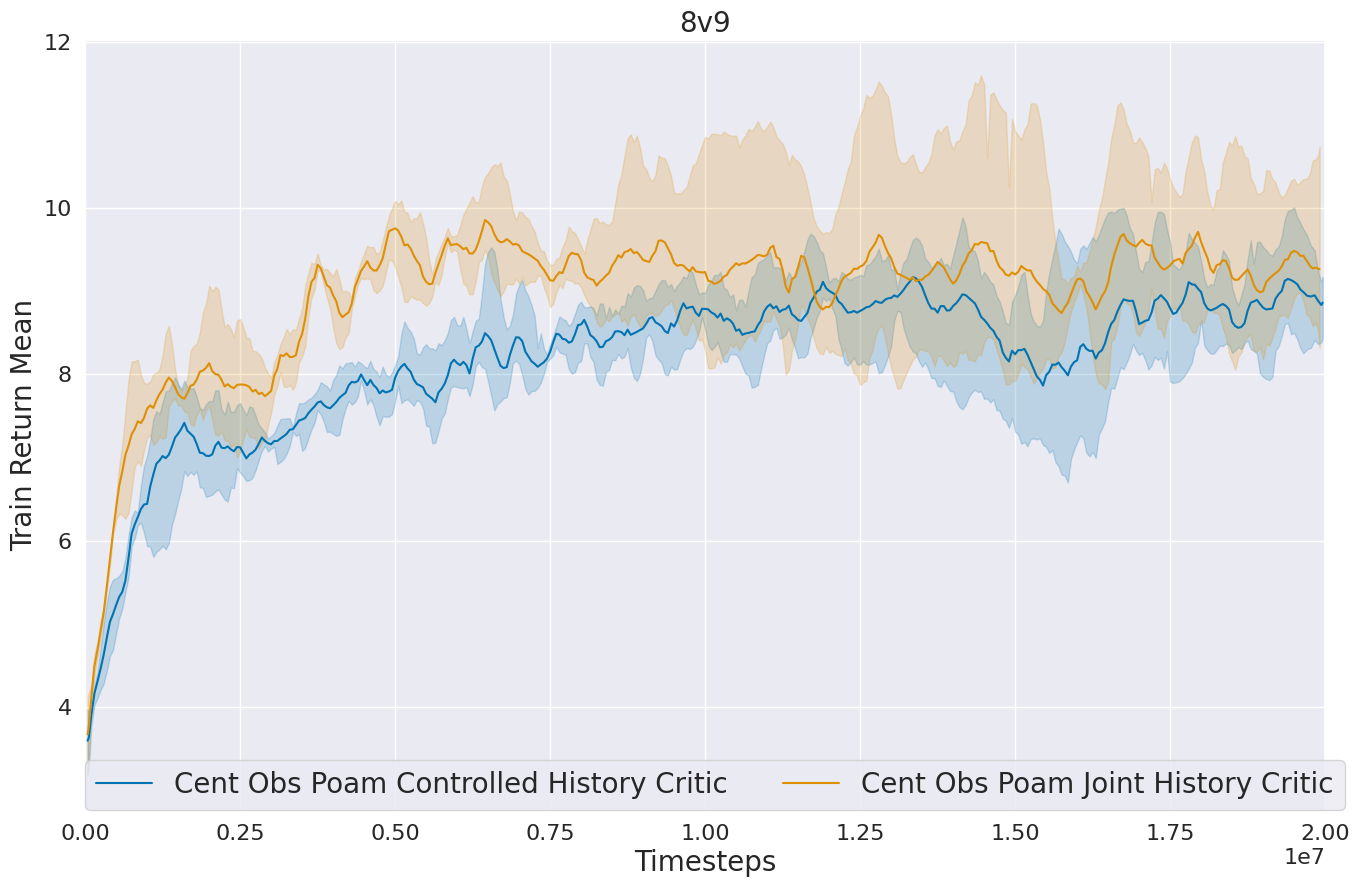}
    \caption{Return curves of POAM policies.}
    \label{fig:smac-return-poam}
  \end{subfigure}
  \hfill
\begin{subfigure}[t]{0.48\textwidth}
    \includegraphics[width=\textwidth]{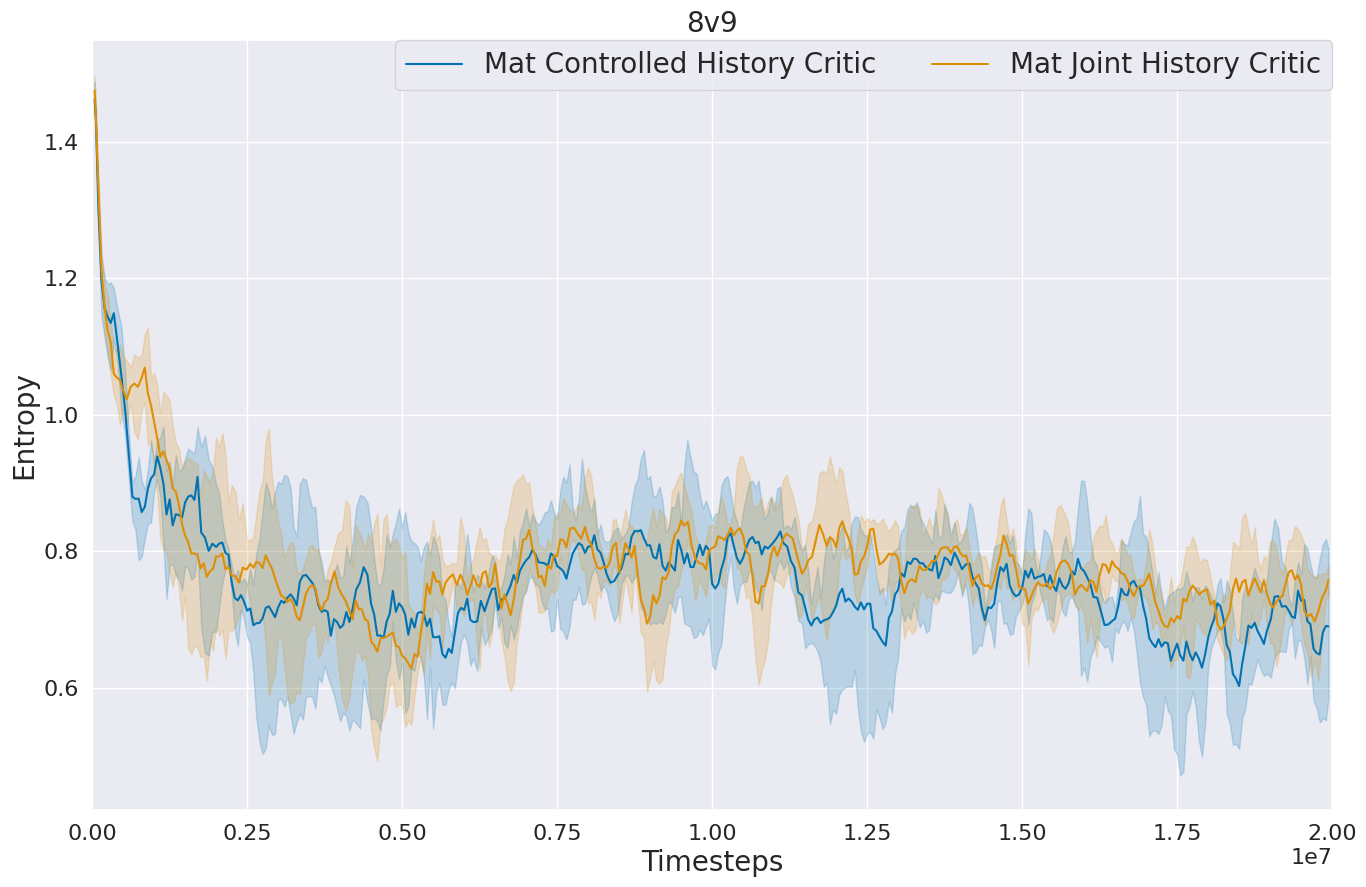}
    \caption{Entropy curves of MAT policies \caroline{This figure makes it look like there's little difference between controlled and joint history critics. I suggest instead visualizing the difference between the returns of the two methods.}}
    \label{fig:smac-mat}
  \end{subfigure}
  \hfill
  \begin{subfigure}[t]{0.48\textwidth}
    \includegraphics[width=\textwidth]{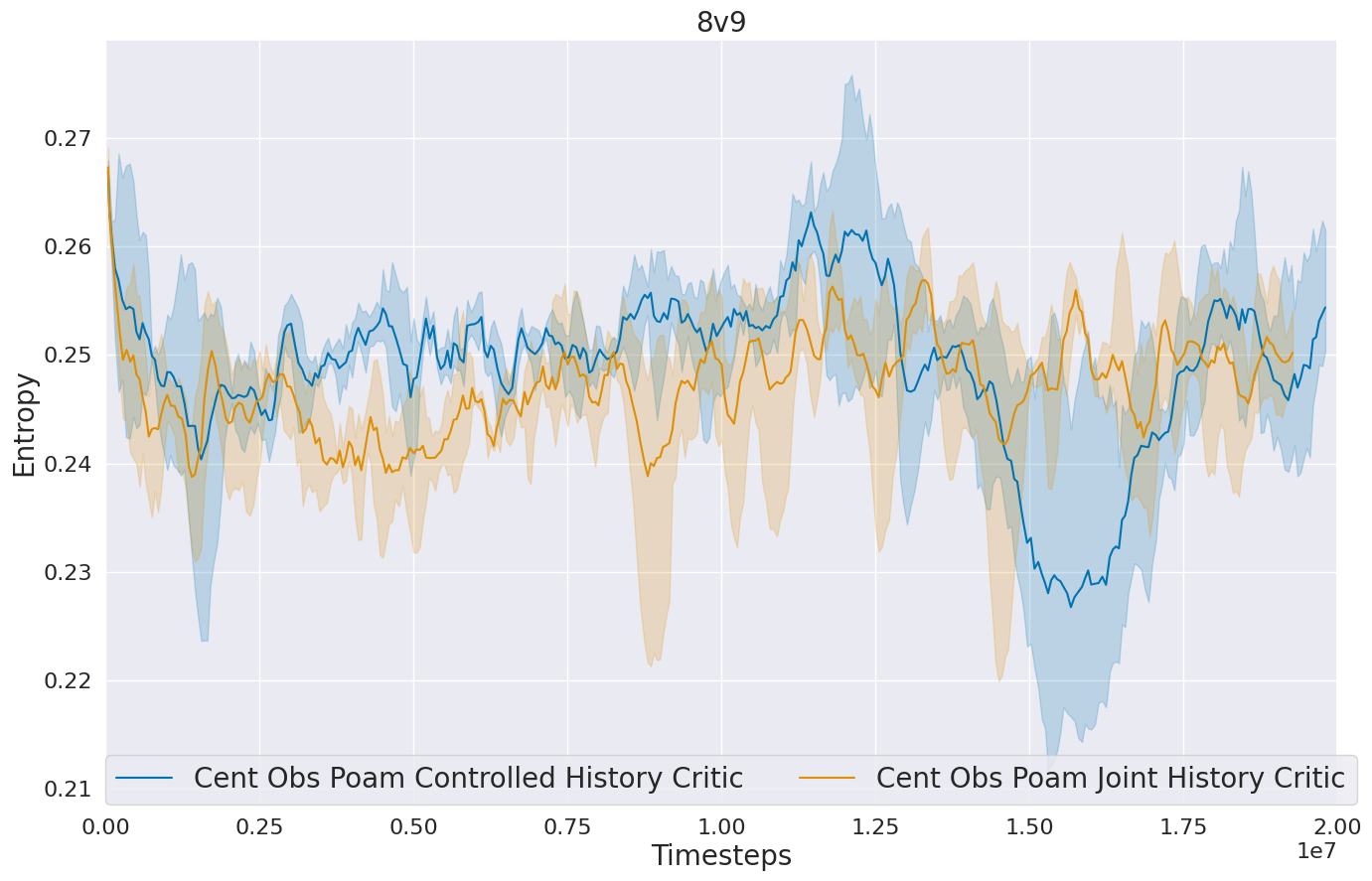}
    \caption{Entropy curves of POAM policies.}
    \label{fig:smac-poam}
  \end{subfigure}
  \caption{Comparison of SMAC return curves and out of entropy curves across MAT and POAM variants.}
  \label{fig:smac-experiments}
\end{figure}

\begin{table}[ht]
\centering
\caption{8v9 POAM Hypothesis Tests.}
\label{tab:smac-paired}
\begin{tabular}{lrr}
\hline
Target algorithm & Mean difference (C--J) & Paired $p$-value \\
\hline
ippo$^{***}$  & -0.765862	 & 0.000775	\\
iql$^{**}$  & -0.589716 & 0.005665	 \\
mappo$^{**}$  & -0.539812 &0.008801	 \\
qmix$^{**}$   & -1.003580 & 0.001541	 \\
vdn$^{***}$   & -0.74291 & 0.000558	 \\
\hline
\end{tabular}
\end{table}

Results for the 8v9 SMAC challenge are similar to stag hunt. We see that joint critics perform better than or equal to controlled critics, and joint critics show improved sampling efficiency for the POAM policies. Hypothesis tests for out of distribution performance were not statistically significant for MAT policies, but showed a certain improvement for joint critic POAM policies.

\section{Conclusion}
We studied the design of centralized critics for $N$-agent ad hoc teamwork, focusing on whether conditioning the critic on the histories of uncontrolled teammates is beneficial. We compared two natural advantage estimators: a \emph{controlled history} estimator that conditions only on the learner's own observation-action histories, and a \emph{joint history} estimator that additionally conditions on the uncontrolled teammates' histories.

On the theoretical side, we proved that both estimators are unbiased for the controlled advantage, and we derived a variance decomposition that isolates the primary force governing whether the joint history estimator reduces variance: the benefit of subtracting a critic correlated to the action-value function as a baseline. We characterized the conditions under which the baseline-cancellation effect dominates---namely, when the joint $Q$ and $V$ functions are highly correlated across uncontrolled histories and have comparable magnitudes of variation---and illustrated both regimes through a simple coordination game in which the joint baseline can either reduce or increase variance depending on the policy under analysis. We made an argument for the joint history estimator having lower variance on average, providing some intuition for our experimental results.

Empirically, joint history critics match or outperform controlled history critics across both the coordination game and the Stag Hunt environment, for both the POAM and MAT base algorithms. In the coordination game, joint critics reach near-optimal policies markedly faster. In Stag Hunt, joint critics yield improved sample efficiency during training and higher returns against uncontrolled agents at evaluation, with statistically significant gains across most uncontrolled agent types tested.

Taken together, our theoretical analysis and experimental results provide both a justification and a practical recommendation: when training a centralized critic in the NAHT setting, conditioning on uncontrolled teammate histories is generally beneficial, and the variance decomposition we derive offers a principled lens for understanding when and why this benefit arises.

\paragraph{Limitations and future work.} Our variance analysis assumes access to unbiased value functions; extending these results to learned critics, where approximation error interacts with the variance trade-off, is a natural next step. Furthermore, our Visitation Weighted
Advantage Variance argument is also currently a conjecture, supported by experimental evidence but not formally established; proving conditions under which the visitation-weighted variance is provably lower under the joint critic would strengthen the theoretical picture. We also restrict our analysis to the variance of the advantage function, but do not connect the ideas that a lower variance results in greater sampling efficiency. One way to conduct this analysis might be by looking at the variance of the policy gradient.

\bibliographystyle{plainnat}
\bibliography{ref}

\appendix




\section{Counterexample: When the Joint Baseline Increases Variance}
We now modify the example in Section~\ref{sec:coordination-example} to illustrate a case where the joint history baseline \emph{increases} variance over the controlled critic. We retain the setup from before, but flip the controlled policy: $\{0,0\}$ is now played with probability $p_0 = 0.2$ and $\{1,1\}$ with probability $p_1 = 0.8$. We again analyze the advantage of the action $a_t = \{0,0\}$.
\label{AppendixA} 

\paragraph{Value functions.} Recomputing the per-step expected rewards under each uncontrolled policy:
\begin{align}
    \mathbb{E}[r \mid \text{Mostly 0}] &= 0.2(0.6561) + 0.8(0.0001) = 0.1313 \\
    \mathbb{E}[r \mid \text{Mostly 1}] &= 0.2(0.00391) + 0.8(0.3164) = 0.2540
\end{align}
Marginalizing, $\mathbb{E}[r] = 0.5(0.1313) + 0.5(0.2540) = 0.1926$. The controlled advantage of $a_t = \{0,0\}$ is unchanged from before in expected one-step reward, $\mathbb{E}[r \mid a_t = \{0,0\}] = 0.330$, so:
\begin{equation}
    A(h_t^c, \{0,0\}) = 0.330 - 0.1926 = 0.1374
\end{equation}

\paragraph{Advantage estimators.} As before, $\hat{A}_{controlled} = r_t - 0.1926$, and
\begin{equation}
    \hat{A}_{joint} \mid \text{Mostly 0} = r_t - 0.1313, \qquad \hat{A}_{joint} \mid \text{Mostly 1} = r_t - 0.2540
\end{equation}
Both are unbiased for $A(h_t^c, \{0,0\}) = 0.1374$.

\paragraph{Variance computation.} The outcome probabilities are unchanged from the previous example, since they depend only on the uncontrolled policy and the conditional reward distribution given $a_t = \{0,0\}$. Computing variances as expected squared deviations from the mean:
\begin{align}
    \operatorname{Var}\!\big[\hat{A}_{controlled}\big] &= 0.330 \cdot (0.8074 - 0.1374)^2 + 0.670 \cdot (-0.1926 - 0.1374)^2 \nonumber \\
    &= 0.330 \cdot 0.4489 + 0.670 \cdot 0.1089 \approx 0.221
\end{align}
\begin{align}
    \operatorname{Var}\!\big[\hat{A}_{joint}\big] &= 0.3281 \cdot (0.8687 - 0.1374)^2 + 0.1720 \cdot (-0.1313 - 0.1374)^2 \nonumber \\
    &\quad + 0.00195 \cdot (0.7460 - 0.1374)^2 + 0.4980 \cdot (-0.2540 - 0.1374)^2 \nonumber \\
    &\approx 0.281
\end{align}

\paragraph{Result.} The joint history baseline now \emph{increases} the variance of the advantage estimator:
\begin{equation}
    \operatorname{Var}\!\big[\hat{A}_{controlled}\big] \approx 0.221, \qquad \operatorname{Var}\!\big[\hat{A}_{joint}\big] \approx 0.281
\end{equation}
The intuition mirrors the previous example. The action $\{0,0\}$ is now played with low probability and yields high reward precisely when paired with the uncontrolled policy (Mostly 0) under which the controlled policy performs \emph{poorly}. Conditioning on the uncontrolled history therefore amplifies, rather than absorbs, the reward fluctuations driven by $r_t$, and the joint baseline ends up tracking variation that the controlled baseline averaged out.


\section{Proofs of the Main Results}
\label{app:proofs}

\subsection{Unbiasedness of Advantage Estimators}
\label{app:proof-unbiased}
\begin{proof}[Proof of Theorem~\ref{theorem:unbiased}]
For the controlled estimator, since $V(h_t^c)$ is constant under the conditioning on $h_t^c$:
\begin{align}
    \mathbb{E}_{h_t^u,\, \tau \sim \pi \mid h_t^c, a_t}\!\big[\hat{A}_{controlled}\big] &= \mathbb{E}_{h_t^u,\, \tau \sim \pi \mid h_t^c, a_t}\!\big[\hat{Q}_t^{(1)}\big] - \mathbb{E}_{h_t^u,\, \tau \sim \pi \mid h_t^c}\!\big[\hat{V}(h_t^c)\big] \nonumber \\
    &= Q(h_t^c, a_t) - V(h_t^c) \nonumber \\
    &= A(h_t^c, a_t)
\end{align}
For the joint history estimator:
\begin{equation}
    \mathbb{E}_{h_t^u,\, \tau \sim \pi \mid h_t^c, a_t}\!\big[\hat{A}_{joint}\big] = \mathbb{E}_{h_t^u,\, \tau \sim \pi \mid h_t^c, a_t}\!\big[\hat{Q}_t^{(2)}\big] - \mathbb{E}_{h_t^u \mid h_t^c, a_t}\!\big[V(h_t^c, h_t^u)\big]
\end{equation}
Since the policy $\pi(a_t \mid h_t^c)$ does not depend on $h_t^u$, we have $p(h_t^u \mid h_t^c, a_t) = p(h_t^u \mid h_t^c)$, and therefore:
\begin{equation}
    \mathbb{E}_{h_t^u \mid h_t^c, a_t}\!\big[V(h_t^c, h_t^u)\big] = \mathbb{E}_{h_t^u \mid h_t^c}\!\big[V(h_t^c, h_t^u)\big] = V(h_t^c)
\end{equation}
Substituting, and again using the unbiasedness of $\hat{Q}_t^{(2)}$:
\begin{equation}
    \mathbb{E}_{h_t^u,\, \tau \sim \pi \mid h_t^c, a_t}\!\big[\hat{A}_{joint}\big] = Q(h_t^c, a_t) - V(h_t^c) = A(h_t^c, a_t)
\end{equation}
\end{proof}

\subsection{Advantage Variance Decomposition}
\label{app:proof-decomp}
\begin{proof}[Proof of Theorem~\ref{theorem:decomp}]
For the controlled estimator, expanding the variance gives:
\begin{align}
    &\operatorname{Var}_{h_t^u,\, \tau \sim \pi \mid h_t^c, a_t}\!\big[\hat{A}_{\text{controlled}}\big]
    = \\
    &\operatorname{Var}_{h_t^u,\, \tau \sim \pi \mid h_t^c, a_t}\!\big[\hat{Q}_t^{(1)}\big]
    + \operatorname{Var}_{h_t^u,\, \tau \sim \pi \mid h_t^c, a_t}\!\big[\hat{V}(h_t^c)\big]
    - 2\operatorname{Cov}_{h_t^u,\, \tau \sim \pi \mid h_t^c, a_t}\!\big[\hat{Q}_t^{(1)}, \hat{V}(h_t^c)\big]
\end{align}
By Assumption~\ref{ass:deterministic-value}, $\operatorname{Var}_{h_t^u,\, \tau \sim \pi \mid h_t^c}\!\big[\hat{V}(h_t^c) \big] = 0$, so $\operatorname{Cov}_{h_t^u,\, \tau \sim \pi \mid h_t^c, a_t}\!\big[\hat{Q}_t^{(1)}, \hat{V}(h_t^c)\big] = 0$, and  we are left with
\begin{equation}
    \operatorname{Var}_{h_t^u,\, \tau \sim \pi \mid h_t^c, a_t}\!\big[\hat{A}_{\text{controlled}}\big] = \operatorname{Var}_{h_t^u,\, \tau \sim \pi \mid h_t^c, a_t}\!\big[\hat{Q}_t^{(1)}\big]
\end{equation}
For the joint history estimator:
\begin{align}
    \operatorname{Var}_{h_t^u,\, \tau \sim \pi \mid h_t^c, a_t}\!\big[\hat{A}_{joint}\big] &= \operatorname{Var}_{h_t^u,\, \tau \sim \pi \mid h_t^c, a_t}\!\big[\hat{Q}_t^{(2)}\big] + \operatorname{Var}_{h_t^u,\, \tau \sim \pi \mid h_t^c, a_t}\!\big[\hat{V}(h_t^c, h_t^u)\big] \nonumber \\
    &\quad - 2\operatorname{Cov}_{h_t^u,\, \tau \sim \pi \mid h_t^c, a_t}\!\big[\hat{Q}_t^{(2)},\, \hat{V}(h_t^c, h_t^u)\big]
\end{align}
Taking the difference:
\begin{align}
    &\operatorname{Var}_{h_t^u,\, \tau \sim \pi \mid h_t^c, a_t}\!\big[\hat{A}_{controlled}\big] - \operatorname{Var}_{h_t^u,\, \tau \sim \pi \mid h_t^c, a_t}\!\big[\hat{A}_{joint}\big] \nonumber \\
    &\quad = \operatorname{Var}_{h_t^u,\, \tau \sim \pi \mid h_t^c, a_t}\!\big[\hat{Q}_t^{(1)}\big] - \operatorname{Var}_{h_t^u,\, \tau \sim \pi \mid h_t^c, a_t}\!\big[\hat{Q}_t^{(2)}\big] \nonumber \\
    &\qquad + 2\operatorname{Cov}_{h_t^u,\, \tau \sim \pi \mid h_t^c, a_t}\!\big[\hat{Q}_t^{(2)},\, \hat{V}(h_t^c, h_t^u)\big] - \operatorname{Var}_{h_t^u,\, \tau \sim \pi \mid h_t^c, a_t}\!\big[\hat{V}(h_t^c, h_t^u)\big]
\end{align}
We now simplify the last two terms. By Assumption~\ref{ass:deterministic-value}, $\hat{V}(h_t^c, h_t^u)$ is a deterministic function of $(h_t^c, h_t^u)$, so all variability in $\hat{V}(h_t^c, h_t^u)$ comes from $h_t^u$:
\begin{equation}
    \operatorname{Var}_{h_t^u,\, \tau \sim \pi \mid h_t^c, a_t}\!\big[\hat{V}(h_t^c, h_t^u)\big] = \operatorname{Var}_{h_t^u \mid h_t^c, a_t}\!\big[\hat{V}(h_t^c, h_t^u)\big]
\end{equation}
For the covariance, we apply the law of total covariance, conditioning on $h_t^u$:
\begin{align}
    &\operatorname{Cov}_{h_t^u,\, \tau \sim \pi \mid h_t^c, a_t}\!\big[\hat{Q}_t^{(2)},\, \hat{V}(h_t^c, h_t^u)\big] \nonumber \\
    &\quad = \underbrace{\mathbb{E}_{h_t^u \mid h_t^c, a_t}\!\Big[\operatorname{Cov}_{\tau \sim \pi \mid h_t^c, h_t^u, a_t}\!\big[\hat{Q}_t^{(2)},\, \hat{V}(h_t^c, h_t^u)\big]\Big]}_{= \, 0 \text{ by Assumption }\ref{ass:deterministic-value}} \nonumber \\
    &\qquad + \operatorname{Cov}_{h_t^u \mid h_t^c, a_t}\!\Big[\mathbb{E}_{\tau \sim \pi \mid h_t^c, h_t^u, a_t}\!\big[\hat{Q}_t^{(2)}\big],\, \mathbb{E}_{\tau \sim \pi \mid h_t^c, h_t^u, a_t}\!\big[\hat{V}(h_t^c, h_t^u)\big]\Big] \nonumber \\
    &\quad = \operatorname{Cov}_{h_t^u \mid h_t^c, a_t}\!\big[q^{(2)}(h_t^c, h_t^u, a_t),\, \hat{V}(h_t^c, h_t^u)\big]
\end{align}
where in the last step follows from Assumption~\ref{ass:deterministic-value}. Substituting both simplifications:
\begin{align}
    &\operatorname{Var}_{h_t^u,\, \tau \sim \pi \mid h_t^c, a_t}\!\big[\hat{A}_{controlled}\big] - \operatorname{Var}_{h_t^u,\, \tau \sim \pi \mid h_t^c, a_t}\!\big[\hat{A}_{joint}\big] \nonumber \\
    &\quad = \operatorname{Var}_{h_t^u,\, \tau \sim \pi \mid h_t^c, a_t}\!\big[\hat{Q}_t^{(1)}\big] - \operatorname{Var}_{h_t^u,\, \tau \sim \pi \mid h_t^c, a_t}\!\big[\hat{Q}_t^{(2)}\big] \nonumber \\
    &\qquad + 2\operatorname{Cov}_{h_t^u \mid h_t^c, a_t}\!\big[q^{(2)}(h_t^c, h_t^u, a_t),\, \hat{V}(h_t^c, h_t^u)\big] - \operatorname{Var}_{h_t^u \mid h_t^c, a_t}\!\big[\hat{V}(h_t^c, h_t^u)\big]
\end{align}
To obtain the final form, we apply the identity $2\operatorname{Cov}[X, Y] - \operatorname{Var}[Y] = \operatorname{Var}[X] - \operatorname{Var}[X - Y]$, with $X = q^{(2)}(h_t^c, h_t^u, a_t)$ and $Y = \hat{V}(h_t^c, h_t^u)$:
\begin{align}
    &\operatorname{Var}_{h_t^u,\, \tau \sim \pi \mid h_t^c, a_t}\!\big[\hat{A}_{controlled}\big] - \operatorname{Var}_{h_t^u,\, \tau \sim \pi \mid h_t^c, a_t}\!\big[\hat{A}_{joint}\big] \nonumber \\
    &\quad = \operatorname{Var}_{h_t^u,\, \tau \sim \pi \mid h_t^c, a_t}\!\big[\hat{Q}_t^{(1)}\big] - \operatorname{Var}_{h_t^u,\, \tau \sim \pi \mid h_t^c, a_t}\!\big[\hat{Q}_t^{(2)}\big] \nonumber \\
    &\qquad + \operatorname{Var}_{h_t^u \mid h_t^c, a_t}\!\big[q^{(2)}(h_t^c, h_t^u, a_t)\big] - \operatorname{Var}_{h_t^u \mid h_t^c, a_t}\!\big[q^{(2)}(h_t^c, h_t^u, a_t) - \hat{V}(h_t^c, h_t^u)\big]
\end{align}
\end{proof}

\end{document}